\documentclass[twoside,11pt]{article}

%

\usepackage{jmlr2e}

\usepackage{amsfonts}
\usepackage{mathtools}
\usepackage{algorithm}
\usepackage{algpseudocode}
\usepackage{etoolbox} 
\usepackage{letltxmacro} 

\usepackage{color}


\newcommand{\Mod}[1]{\ \mathrm{mod}\ #1}
\newtheorem{thm}{Theorem}[section]
\newtheorem{cor}[thm]{Corollary}
\newtheorem{lem}[thm]{Lemma}

\newtheorem{defn}[thm]{Definition}

\DeclareMathOperator*{\argmin}{arg\,min}

\newtoggle{graphics}
\LetLtxMacro{\reallyincludegraphics}{\includegraphics}
\renewcommand{\includegraphics}[2][]{\iftoggle{graphics}{\reallyincludegraphics[#1]{#2}}{}}

\toggletrue{graphics}


\jmlrheading{19}{2019}{1-42}{5/18; Revised 8/18, 11/18}{1/19}{18-281}{Veit Elser, Dan Schmidt and Jonathan Yedidia}


\ShortHeadings{Monotone Learning with Rectified Wire Networks}{Elser, Schmidt and Yedidia}
\firstpageno{1}

\begin{document}

\title{Monotone Learning with Rectified Wire Networks} 

\author{\name Veit Elser \email ve10@cornell.edu \\
       \addr Department of Physics\\
       Cornell University\\
       Ithaca, NY 14853-2501, USA
       \AND
       \name Dan Schmidt \email Daniel.Schmidt@analog.com \\
       \and
        \name Jonathan Yedidia \email Jonathan.Yedidia@analog.com \\
       \addr Analog Devices, Inc.\\
       Boston, MA, USA}

\editor{Samy Bengio}

\date{}

\maketitle

\begin{abstract}
We introduce a new neural network model, together with a tractable and monotone online learning algorithm. Our model describes feed-forward networks for classification, with one output node for each class. The only nonlinear operation is rectification using a ReLU function with a bias. However, there is a rectifier on every edge rather than at the nodes of the network. There are also weights, but these are positive, static, and associated with the nodes. Our \textit{rectified wire networks} are able to represent arbitrary Boolean functions.
Only the bias parameters, on the edges of the network, are learned.
Another departure in our approach, from standard neural networks, is that the loss function is replaced by a constraint. This constraint is simply that the value of the output node associated with the correct class should be zero. Our model has the property that the exact norm-minimizing parameter update, required to correctly classify a training item, is the solution to a quadratic program that can be computed with a few passes through the network. We demonstrate a training algorithm using this update, called sequential deactivation (SDA), on MNIST and some synthetic datasets. Upon adopting a natural choice for the nodal weights, SDA has no hyperparameters other than those describing the network structure. Our experiments explore behavior with respect to network size and depth in a family of sparse expander networks. 
\end{abstract}

\begin{keywords}
  Neural Networks, Online Training Algorithms, Rectified Linear Unit
\end{keywords}

Some of the simplest modes of machine learning are monotone in character, where the representation of knowledge changes unidirectionally. In the $k$-nearest-neighbor classification algorithm \citep{cover1967nearest}, for example, exemplars of the classes are amassed, monotonically, over the course of training. It is also possible to have monotone learning where memory does not grow, but decreases over time. Suppose we try to classify Boolean feature vectors using conjunctive normal form formulas, one for each class. Starting with large and randomly initialized formulas, during training we simply discard, from the formula of the feature vector's class, all the inconsistent clauses \citep{mooney1995encouraging}. Clearly an attractive feature of both of these monotone algorithms is the computational ease of reaching the desired outcome, i.e. additional exemplars for improved discrimination, or discarded clauses to better accommodate a class.

We report on a new kind of monotone learning where during training only the values of a fixed number of parameters are changed, and in a monotone fashion. Like the two examples above, there is a simple and computationally tractable objective  when learning each new item. But perhaps the most interesting feature of our scheme is that it operates on deep networks of rectifiers, a setting where training is not normally seen as computationally tractable.

Monotone learning with deep rectifier networks is made possible by two changes to the standard paradigm: (i) \textit{conservative} updates that minimize the norm of the parameter changes, and (ii) eliminating all but the bias variables as learned parameters.

Although the feedforward computations in standard neural network models are inspired by biological neuronal computations, the training protocols in widespread use seem far from natural. Humans are able to generalize the shape of the digit 5 without suffering through many training ``epochs" through a data set, and are capable of building a rudimentary representation of digits or other classes of objects from relatively few examples. The conservative learning principle, introduced to the best of our knowledge by \cite{widrow1988layered} as the ``minimal disturbance principle," comes closer to our experience of natural learning. In this mode of learning the parameters of the network are minimally changed to accommodate each example as it is received, with the rationale that the attention to minimality preserves the representation created by earlier examples.

The minimal disturbance principle was reprised by \cite{crammer2006online} as the ``passive-aggressive algorithm", in the context of support-vector machines. There are relatively few models where ``aggressive" parameter updates --- minimal, yet achieving immediate results --- are easily computed. To emphasize the norm-minimizing characteristic of this learning mode we use the term ``conservative learning." An approximate implementation of conservative learning was successfully used to discover the Strassen rules of matrix multiplication \citep{elser2016network}.

In the present work we make changes to the ``standard neural network model" to enable conservative learning. In broad outline, the changes guarantee monotonicity, making the computation of the conservative updates tractable. A key step was elevating the role of the additive parameter, or bias. The metaphor that replaces Hebbian synapse (weight) learning is that of a silting river delta, with myriad channels whose levels (bias) rise differentially, but monotonically, over time. 

Below is an overview of our contributions.

\begin{itemize}
    \item Inspired by analog implementations of logic, we propose replacing the conventional rectified sum computed by a standard neuron using a ReLU function
\[
y\leftarrow \max{\left(0, \;\sum_i w_i x_i-b\right)},
\] 
by a sum of rectifications:
\begin{equation}\label{sumofrect}
y\leftarrow w \sum_i \max{(0,\; x_i-b_i)}.
\end{equation}
Only the bias parameters are learned; the weights are static.

\item Even with just positive weights $w$ in \eqref{sumofrect}, we
  show that networks of \textit{rectified wires} can represent
  arbitrary Boolean functions. This construction makes use of doubled inputs, where \textsc{true} is encoded as $(1,0)$, \textsc{false} as $(0,1)$, and generalizes, for symbolic data, to one-hot encoding.

\item The node values of our networks are non-negative. We propose defining class membership of data by the corresponding output node having value zero.

\item Because the biases on all the wires are non-negative and can only increase during training, we choose to initialize them at zero.

\item We show that the 2-norm minimizing change of the bias parameters, that sets the class output node for a given input to zero, is the solution to a quadratic program.

\item An iterative algorithm similar to stochastic gradient descent, including both backward propagation and
  a new kind of forward propagation,
  is proposed
 to approximately find the 2-norm minimizing bias changes. Executing the minimum number
  of iterations required to make the class output node the smallest in
  value defines the \textit{sequential deactivation} algorithm (SDA).

\item We show that a particular limit of SDA, on networks with a single hidden layer, learns arbitrary Boolean functions.

\item Networks with better scaling have multiple layers, and their training is sensitive to the settings of the static weights. We propose \textit{balanced weights} based just on the in-degrees and out-degrees in this general setting.

\item A two-parameter family of random sparse expander networks is introduced to explore the size and depth behavior of learning in our model.

\item Conservative learning on rectified wire networks with the SDA algorithm is demonstrated for MNIST and synthetic datasets.

\item Despite our model's capacity for depth in its representations, we find that the best experimental results with the SDA algorithm are obtained in a limit where deactivation occurs only in the final layer, where the model is equivalent to a network with just a single hidden layer of rectifier gates. The pursuit of conditions that fully utilize activation/decativation in all layers is identified as the goal of future research.

\end{itemize}

\section{Rectified wire networks}\label{sec:RWnets}

One motivation for our network model is the implementation of logic by analog computations without multiplications. The analog counterparts of \textsc{true} and \textsc{false} are, respectively, the numbers 1 and 0. Before presenting our ``rectified wire" network model, we consider networks constructed from biased rectifier gates. A biased rectifier gate with $K$ inputs,
\[
R[b](x_1,\ldots,x_K)=\max{(0,x_1+\cdots+x_K-b)},
\]
generalizes \textsc{and} and \textsc{or} gates. \textsc{and} gates are realized with bias $b=K-1$. We would get the \textsc{or} gate with $b=0$ if we could saturate the output at the value 1. We show below how this detail can be fixed with additional rectifiers and a suitable network design.

Negations are completely absent from our rectifier implementations of general logic circuits. This is made possible by the process of \textit{demorganization}, where \textsc{not} gates are pushed through the circuit from output to input, exchanging \textsc{and} and \textsc{or} gates as dictated by De Morgan's laws. After all the \textsc{not} gates have been pushed through, the only remaining \textsc{not} gates act directly on the inputs. We accommodate this by allowing each input to the logic circuit to be replaced by an analog pair with values $(1,0)$ for \textsc{true} and $(0,1)$ for \textsc{false}. We refer to this encoding scheme as \textit{input doubling}.

It is relatively straightforward to show, as we do in theorem \ref{thm:thm1}, that networks comprising only rectifier gates, with bias variables as the only parameters, can mimic any logic circuit. From the machine learning perspective, our network model has at least the capacity to represent the classes defined by the truth value of arbitrary Boolean functions. However, the parameter settings that realize these Boolean classes are very special points in the continuous parameter space of the model, and are not claimed to be directly relevant for the intended use of these networks.

\begin{thm}\label{thm:thm1}
    Any Boolean function on $N$ inputs and computed with $M$ binary \textsc{and}/\textsc{or} gates and any number of \textsc{not} gates can be implemented by an analog network comprising at most $5M$ biased rectifier gates taking the corresponding $2N$ doubled analog inputs.
\end{thm}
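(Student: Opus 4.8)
The plan is to reduce the analog construction to a purely monotone one by first \emph{demorganizing} the circuit, and then to replace each monotone gate by a small dual-rail gadget of biased rectifiers, the only real subtlety being the saturation of \textsc{or}. First I would push every \textsc{not} gate from the outputs toward the inputs using De Morgan's laws, $\overline{a\land b}=\bar a\lor \bar b$ and $\overline{a\lor b}=\bar a\land \bar b$, together with $\overline{\bar x}=x$. Each such move merely exchanges an \textsc{and} for an \textsc{or} (or vice versa) and relocates two negations to its inputs; it never creates or destroys an \textsc{and}/\textsc{or} gate. Hence after all \textsc{not} gates have been driven to the leaves one obtains an equivalent monotone circuit with exactly the same $M$ binary gates, whose inputs are the literals $x_i$ and $\bar x_i$. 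These literals are supplied for free by input doubling, since $\bar x_i$ is just the second coordinate of the doubled pair encoding $x_i$.

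Next I would carry a \emph{dual-rail} invariant along every wire: each Boolean value $v$ is represented by the pair $(v,\bar v)$ with $v,\bar v\in\{0,1\}$ and $v+\bar v=1$, a property the doubled inputs satisfy by construction. For an \textsc{and} gate the ``true'' rail is computed directly by $v_1\land v_2=R[1](v_1,v_2)=\max(0,v_1+v_2-1)$, which already lands in $\{0,1\}$ and needs no saturation; for an \textsc{or} gate the complementary rail is obtained symmetrically as $\bar v_1\land\bar v_2=R[1](\bar v_1,\bar v_2)$. The one genuinely new ingredient is a saturated \textsc{or}, needed for the remaining rail of each gate. The key identity is $a\lor b=R[0]\!\left(a,\;R[1](\bar a,b)\right)$, which one checks on the four assignments of $(a,b)$: the inner rectifier contributes $\bar a\land b=\max(0,b-a)$, and adding $a$ through the bias-free rectifier $R[0]$ yields exactly $\min(a+b,1)\in\{0,1\}$. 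This gadget uses a constant number of rectifiers (three suffice), so each demorganized gate is realized by at most five biased rectifiers while preserving the dual-rail invariant.

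I would finish by an induction on gate depth: the invariant holds at the doubled inputs, and each gadget maps dual-rail Boolean inputs to dual-rail Boolean outputs, so the designated output rail equals the value of the Boolean function on every input. Summing the per-gate cost over the $M$ gates gives at most $5M$ biased rectifier gates acting on the $2N$ doubled inputs, as claimed. The main obstacle is precisely the saturation of \textsc{or}: a single rectifier is convex, whereas clamping to $1$ is concave, so no purely additive rectifier can implement it in isolation. Carrying both polarities resolves this, because it lets the concave saturation be expressed through an \textsc{and}-type (convex) rectifier on the complements, after which the value rail is recovered additively by the identity above. The remaining burden of the proof is then routine: verifying that this dual-rail bookkeeping composes consistently through an arbitrary interconnection of gates.
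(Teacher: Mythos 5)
Your proposal is correct and takes essentially the same route as the paper: both carry a dual-rail (value, complement) representation along every wire and replace each binary gate by a small gadget of biased rectifiers, with the saturation of \textsc{or} obtained by routing through the complement rail exactly as in your identity $a\lor b=R[0]\left(a,\,R[1](\bar a,b)\right)$; the paper's ``output forks'' are your dual rails, its Figures 1--3 are your gadgets, and it arrives at the same $5M$ count. One small caution: the opening demorganization step is unnecessary once both rails are carried (a negation is just a rail swap), and the claim that De Morgan pushing preserves the gate count exactly fails for circuits in which a gate's output is consumed with both polarities --- but since your gadgets compute both rails anyway, one gadget per \emph{original} gate suffices and the stated bound is unaffected.
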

\begin{proof}
As a circuit, the function takes $N$ Boolean inputs and produces one
Boolean output. We associate negations in the circuit with
gate outputs as shown for the case of the
\textsc{and} gate in the left panel of Figure \ref{fig:fig1}. Upon
output, both the gate output $y_1$ and its negation $y_2$ by a \textsc{not} gate (rendered as an open circle), are made available to
gates receiving inputs. One of the output forks might not be used, as
for the final gate of the circuit whose single output is the value of
the Boolean function. The diagram also shows how each of the
gate inputs is derived from one fork of the output of another gate, or an input to the circuit.

\begin{figure}[t]
\begin{center}
\includegraphics[width=5.in]{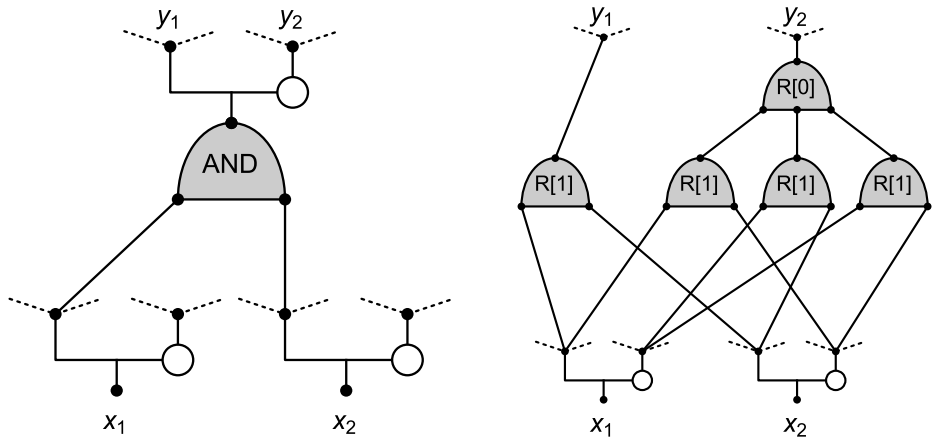}
\end{center}
\caption{\textsc{and} gate and its replacement with rectifier gates in the analog network.}
\label{fig:fig1}
\end{figure}

\begin{figure}[t]
\begin{center}
\includegraphics[width=5.in]{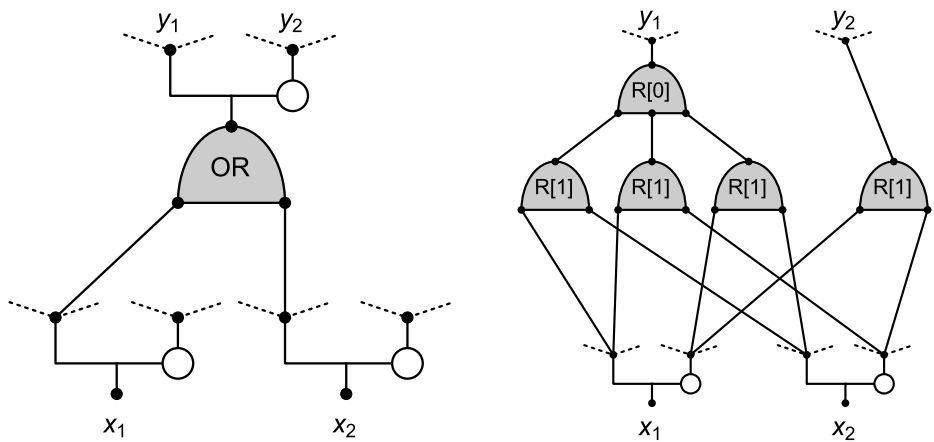}
\end{center}
\caption{\textsc{or} gate and its replacement with rectifier gates in the analog network.}
\label{fig:fig2}
\end{figure}

The body of the proof consists of confirming that the Boolean gate relationship between the inputs $(x_1,x_2)$, and the output $y_1$ and its negation $y_2$, is exactly reproduced by a corresponding rectifier gate network. The replacement for the \textsc{and} gate is shown in the right panel of Figure \ref{fig:fig1}. All bias parameters are either 0 or 1, and the only values that arise on the nodes are also 0 and 1. The replacement rule for the \textsc{or} gate is shown in Figure \ref{fig:fig2}. For both of the circuit replacement rules just described there are three others, where one or both of the inputs are negated. These are trivially generated from the ones shown by inserting a negation right after the inputs ($x_1$ or $x_2$ or both), which is equivalent to swapping the branches of the input-forks.

For completeness we include the cases where a gate receives both
inputs from the same output fork. The case of the
\textsc{and} gate is shown in Figure \ref{fig:fig3}. The replacement
rule shown on the left applies when both inputs come from the same
branch of the input fork, while the rule on the right applies when the inputs come from both branches. As before, the rule for the other choice of input branch is obtained by swapping branches in the replacement. For the \textsc{or} gate the rule on the left of Figure \ref{fig:fig3} is unchanged, while the one on the right has the two bias values swapped.

\begin{figure}[t]
\begin{center}
\includegraphics[width=5.5in]{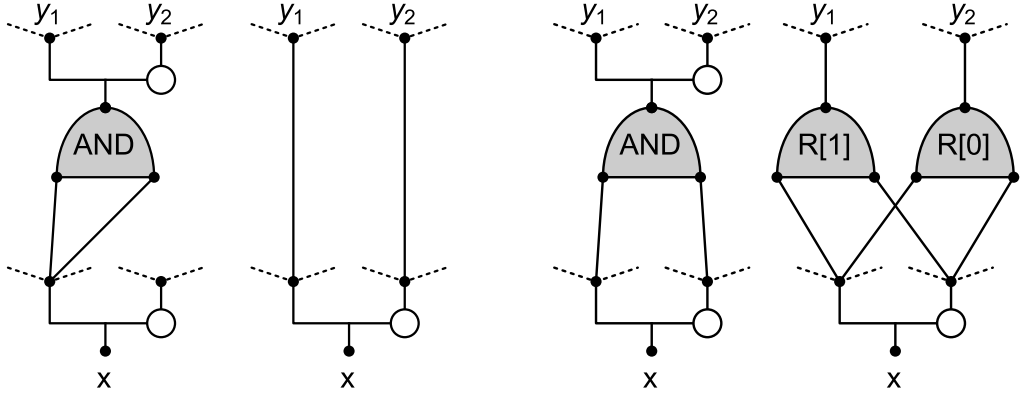}
\end{center}
\caption{\textsc{and} gate replacement rules when both inputs come
  from the same output fork, either the same branch (left) or both branches (right). The rule for the \textsc{or} gate is the same for the case on the left, and has $R[0]$ and $R[1]$ exchanged for the case on the right.}
\label{fig:fig3}
\end{figure}

The only negations remaining after all \textsc{and} and \textsc{or} gates have been replaced are those in the $N$ input forks to the circuit. These are taken care of by defining the input to the analog network as the $2N$ values on the branches of those forks. The resulting network will have only rectifier gates. From the gate replacement rules (Figs. \ref{fig:fig1}-\ref{fig:fig3}) we see that the number of rectifier gates is at most $5M$.

\end{proof}

We now generalize our rectifier gate model in two ways. The first follows from
the observation that a rectifier on $K$ inputs with bias $b$ is
equivalent to summing those inputs and placing a (single-input) rectifier with bias $b$ on each of the wires leaving the rectifier. We give these biases the freedom to take different values, increasing the expressivity of the network and boosting the parameter count commensurate with ``standard model" networks, which have a weight on each edge. By setting the biases on selected edges in a completely connected network to large values, thereby disabling them, our model has the capacity to select its architecture.

The second way we generalize the model is to introduce weights at all the summations. These are static --- not intended to be learned --- and positive. We show in section \ref{sec:weights} that in layered networks one can apply a rescaling to the biases that has the effect of restoring all the weights to 1 without changing the classification behavior of the network. The weights therefore only play a role during training, through the gain/decay they impart to the analog signal as it propagates from input to output, and the response this elicits from the bias parameters.

We now formally define our model.

\begin{defn}\label{def:rectwire}
A \textit{rectified wire network} is a model of computation on a directed acyclic graph. Associated with each wire or edge $i\to j$ joining node $i$ to $j$ is the node value $x_i$ and edge output $y_{i\to j}$ related by
\[
y_{i\to j}=\max{(0,x_i-b_{i\to j})},
\]
where $b_{i\to j}$ is the bias parameter for the edge. The edge outputs $y_{i\to j}$, from all edges $i\to j$ incident on node $j$, are summed to give the value $x_j$ of node $j$:
\[
x_j=w_j\sum_{i\to j}y_{i\to j}.
\]
The positive constant $w_j$ is the \textit{weight} of the node. The node values $x$, edge outputs $y$, and bias parameters $b$ of a rectified wire network are general real numbers.

The number of edge outputs $y$ summed by a node $j$ is its \textit{in-degree} $|{\to}j|$, and the number of edges receiving $x_j$ as input is its \textit{out-degree} $|j{\to}|$.
Nodes with in-degree zero are \textit{input nodes} and have their $x$ values assigned. The outputs of the network are the $x$ values of the nodes with out-degree zero, the \textit{output nodes}. Nodes which are neither input nor output nodes are the \textit{hidden nodes} of the network. The connectivity of rectified wire networks is such that there always exists a path between any input node and any output node. 
\end{defn}

The content of theorem \ref{thm:thm1}, re-expressed in terms of rectified wires, takes the following form:
\begin{cor}\label{cor:cor1}
The truth value of a Boolean function on $N$ variables that can be computed with $M$ binary \textsc{and}/\textsc{or} gates and any number of \textsc{not} gates can be represented by a two-output rectified wire network taking $2N$ doubled inputs and having at most $7M$ hidden nodes. Like the doubled inputs, the two output nodes encode the truth of the function with values $(1,0)$ or $(0,1)$. 
\end{cor}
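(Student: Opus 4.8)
The plan is to derive Corollary~\ref{cor:cor1} from Theorem~\ref{thm:thm1} by translating the biased-rectifier-gate circuit produced there into a rectified wire network in the sense of Definition~\ref{def:rectwire}. Theorem~\ref{thm:thm1} already delivers a circuit of at most $5M$ biased rectifier gates acting on the $2N$ doubled inputs, whose final gate emits the truth value of the function; moreover, as noted in its proof, each gate makes both its output and that output's negation available on its two forks, so the function value \emph{and} its complement can be read off at the last gate. The remaining work is essentially a change of formalism: re-expressing the ``sum-then-rectify'' gate $R[b](x_1,\dots,x_K)=\max(0,\sum_i x_i-b)$ in the ``rectify-then-sum'' language of \eqref{sumofrect} and Definition~\ref{def:rectwire}.

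First I would apply, gate by gate, the equivalence recorded just before Definition~\ref{def:rectwire}: a $K$-input rectifier with bias $b$ equals a unit-weight summing node whose $K$ incoming edges are unbiased and each of whose outgoing edges carries the bias $b$. Thus each biased rectifier gate becomes a single hidden node that performs the summation, with its bias migrating onto its outgoing wires; fan-out to several downstream gates is handled by giving that node several outgoing edges, all sharing the same bias. The $2N$ doubled inputs become the input nodes, feeding the first layer through unbiased edges, and I would fix every weight $w_j=1$ so that the only node values and edge outputs arising are again $0$ and $1$. What must be verified here is that the substitution is semantics-preserving: each node value equals the pre-rectification sum of the gate it replaces, so the edge output $\max(0,x_i-b_{i\to j})$ on each wire reproduces exactly the corresponding gate output. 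This is the routine interior of the argument, following directly from the gate equivalence.

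The boundaries require more care. An output of the network is, by Definition~\ref{def:rectwire}, the \emph{value} of a node of out-degree zero, whereas the function value emerges in the translated circuit as an \emph{edge output} $\max(0,x_s-b)$ leaving the final summing node $s$, the bias $b$ having migrated onto that edge. I would therefore introduce two fresh output nodes, one fed by the value-fork and one by the negation-fork of the last gate, each through a single incoming edge carrying the appropriate final bias, so that the output node values are exactly the rectified signals in $\{0,1\}$ and realize the $(1,0)$/$(0,1)$ encoding asserted in the statement.

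Finally I would tally the hidden nodes: the at most $5M$ gates of Theorem~\ref{thm:thm1} contribute at most $5M$ summing nodes, and faithfully carrying the doubled value/negation signals through the translation, together with the output interface, adds at most a further $2M$ nodes, giving the claimed bound of at most $7M$ hidden nodes (the two output nodes not being counted as hidden). I expect the main obstacle to be precisely this accounting, carried out in lockstep with checking faithfulness at the input and output boundaries: the interior substitution is immediate, but relocating biases onto edges and forcing each output to appear as a node value rather than an edge output is where the node count and the $0/1$ semantics must be pinned down together.
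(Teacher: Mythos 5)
Your proposal is correct and follows essentially the same route as the paper: translate the Theorem~\ref{thm:thm1} rectifier-gate construction into the rectified-wire formalism (summing nodes, with biases migrated onto outgoing edges) and count $5M$ nodes for the gate internals plus $2M$ for the output/negation forks, giving $7M$ hidden nodes. The paper's proof is terser but performs the identical $2M+5M$ accounting, adding only the further remark that the wiring may be taken to be the complete acyclic graph, since large bias settings deselect unwanted edges.
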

\begin{proof}
A logic circuit with $M$ binary \textsc{and}/\textsc{or} gates has $M$ nodes, one at each gate output. When re-expressed as a rectified wire network, each gate output is replaced by two nodes and, in the worst case (Figs.~\ref{fig:fig1}-\ref{fig:fig2}), the gate itself results in five additional nodes. The number of hidden nodes in the resulting rectified wire network is therefore bounded by $7M$. The wiring of the nodes, including the input and output nodes, can be the complete acyclic graph because large bias settings, by deselection, realize any network on the given number of nodes.
\end{proof}

While possible in principle, we should not expect it to be easy for a rectified wire network to learn bias values on the modest networks promised by corollary \ref{cor:cor1} when presented with data generated by a Boolean function. On the other hand, as we show in section \ref{sec:class}, this particular network model has a significant advantage over the ``standard model" in having tractable conservative learning. A different, equally valid representation might be learned instead, most likely on a much larger network.

We render rectified wire networks as simple directed graphs, with input nodes at the bottom, output nodes at the top, and all edges directed in the upward direction. The rendering of the bias values --- the only learned parameters --- is through the edge thickness. Edges with low bias are thick, those with high bias are thin. When the bias on an edge is so great that the rectified value is zero for any input to the network, the edge is effectively absent (zero thickness). Figure \ref{fig:fig4} shows this style of rendering of a network with four input, four hidden, and two output nodes. 

\begin{figure}[!t]
\begin{center}
\includegraphics[width=3.in]{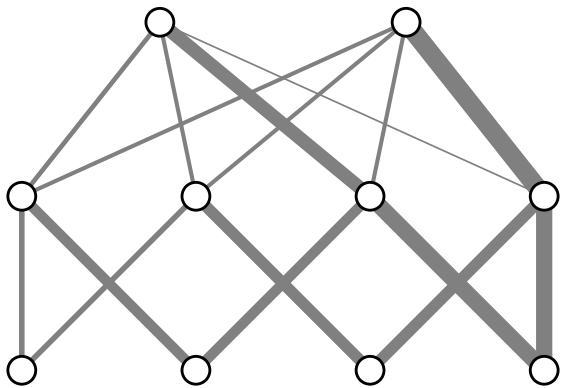}
\end{center}
\caption{Rectified wire network with four input nodes (bottom row), four hidden nodes, and two output nodes (top row). Thinnest edges have the highest bias.}
\label{fig:fig4}
\end{figure}

We can generalize both the inputs and the outputs of rectified wire networks to expand their scope beyond Boolean function-value classifiers. In many applications the feature vectors are strings of symbols from a finite alphabet: $\{$\textsc{true, false}$\}$, $\{$\textsc{a,c,g,t}$\}$, etc. As in the Boolean case, for an alphabet of size $K$ we use the one-hot encoding scheme where the $k^\mathrm{th}$ symbol is expanded into $K$ numerical values all zero except for a 1 in the $k^\mathrm{th}$ position. Numerical (non-symbolic) data may be encoded in the same style. Suppose the data vectors have length $L$: $v=(v_1,\ldots,v_L)$. We first obtain the cumulative probability function $\theta_i$ for each component $i=1,\ldots,L$ over the dataset. The data vector for input to the network, corresponding to $v$, is then
\begin{equation}\label{analogvec}
d(v)=\left[\;\theta_1(v_1)\,,1-\theta_1(v_1)\;,\; \ldots \;,\; \theta_L(v_L)\,,1-\theta_L(v_L)\;\right].
\end{equation}
With this convention, the input to the network is always a vector of numbers between 0 and 1 whose sum equals the number of components or  string length of the data. The mapping by the cumulative probability functions, in the case of analog data, ensures that the distribution in each  channel is uniform.

The number of output nodes equals the number of data classes. A Boolean function classifier would have two output nodes to classify the truth value of the input. To classify MNIST data \citep{lecun1998gradient}, the network would have 10 output nodes. Before we describe how these outputs are interpreted we make a trivial observation:
\begin{lem}\label{lem:indicate}
In a rectified wire network with input vector $d\ge 0$, there exist settings of the bias parameters where, for any of the output nodes $c\in C$, we have $x_c=0$ and $x_j>0$ for $j\in C\setminus\{c\}$.
\end{lem}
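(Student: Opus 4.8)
The plan is to exploit two structural features of the model: each output node is a sink (out-degree zero), so choosing biases that zero out one output cannot obstruct the others, and a negative bias on an edge produces a strictly positive edge output even from a zero node value. I would fix the distinguished output $c\in C$ and treat the edges incident on $c$ separately from all the rest, the point being that $c$, as a sink, never feeds its value back into the computation of any other node.

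First I would make every output other than $c$ strictly positive. Set $b_{i\to j}=-1$ on every edge $i\to j$ with $j\neq c$, and argue by induction along a topological ordering of the DAG that $x_j>0$ for every non-input node $j\neq c$. The base case is the input nodes, where $x_i=d_i\geq 0$. For the inductive step, a non-input node $j\neq c$ has in-degree at least one (by the definition, in-degree zero characterizes input nodes), and each incoming edge satisfies $y_{i\to j}=\max(0,x_i-(-1))=x_i+1\geq 1$ using $x_i\geq 0$ from the induction hypothesis; since $w_j>0$, summing gives $x_j\geq w_j>0$. In particular every output node $j\in C\setminus\{c\}$ comes out strictly positive, and all of these values are finite.

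It then remains to force $x_c=0$ without disturbing the above. Because $c$ is a sink, the values $x_i$ at its predecessors were already fixed in the previous step and do not depend on the biases on edges into $c$; I would simply pick these remaining biases large, $b_{i\to c}>x_i$ for each edge $i\to c$, so that $y_{i\to c}=\max(0,x_i-b_{i\to c})=0$ and hence $x_c=w_c\sum_{i\to c}y_{i\to c}=0$.

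The only real subtlety, and what I expect to be the one point worth flagging, is the degenerate input $d=0$: with non-negative biases alone the whole network would collapse to zero and the statement would fail, so the negative-bias choice above is exactly what rescues this case, and it is legitimate because the definition permits the bias parameters to be arbitrary reals. Everything else is bookkeeping, namely the finiteness of the intermediate values and the observation that sinks decouple, which together justify handling the ``zero out $c$'' and ``keep the rest positive'' requirements independently.
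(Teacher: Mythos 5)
Your proof is correct and follows the same two-step structure as the paper's: first make every output other than $c$ strictly positive with uniformly low biases, then force $x_c=0$ by raising the biases on the edges into the sink $c$ above the already-fixed values of its predecessors. The only genuine difference is in the first step, where the paper sets all biases to zero and appeals to the connectivity assumption together with the presence of a positive input component --- which silently excludes the degenerate case $d=0$ --- whereas your choice $b_{i\to j}=-1$ (legitimate, since the definition allows arbitrary real biases) covers that case as well.
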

\begin{proof}
All the output nodes will have positive values for zero bias, because the weights are positive and each output is connected by a path to one of the inputs with positive value (see definition \ref{def:rectwire}). To exclusively arrange for $x_c=0$, we set $b_{i\to c}=x_i$ on all the edges $i\to c$ incident on $c$ (keeping all other biases at zero).
\end{proof}
We will explain in section \ref{sec:class} why zero is the appropriate indicator function value for defining classes on a conservatively trained rectified wire network.

\section{Notation}

Now that most of the elements of our network model have been
introduced, we summarize our notational conventions. Lower case
symbols represent variables or parameters on the network. These bear a
latin index when defined on a node of the network. Examples are the
node values, $x_i$, or the weights $w_i$. The index is written as a
directed edge between nodes when defined on an edge of the
network. Examples are the bias parameters, $b_{i\to j}$, and
edge-outputs $y_{i\to j}$. The same lower case symbols, both for nodes
and edges, when written without subscripts denote the vector of all
the indexed values.  For example, $b$ is the vector of all biases in
the network and $d$ the vector of data values on the input
nodes. When such symbols appear in an equality/inequality, the relation applies componentwise. Round brackets attached to a variable identify its functional
dependence, when relevant. The value of node $k$, for example,
might be written $x_k(b,d)$ to emphasize its dependence on the biases and input data.

Upper case symbols are used for sets. The symbols $D$, $H$, $C$ are reserved for the data (input), hidden, and class (output) nodes respectively, while $E$ is the set of directed edges. Upper case symbols are also used for continuous sets with the same convention for functional dependence as variables. For example, $B(d)$ might be the set of bias values $b$ such that for $b\in B(d)$ the node value $x_i(b,d)$ is zero for input $d$. Cardinality is denoted with vertical bars, e.g. $|E|$ for the number of edges in the network. Node $i$ in a network has in-degree $|{\to} i|$ and out-degree $|i{\to}|$. 

\section{Counterfactual classification and conservative learning}\label{sec:class}

Monotone learning on rectified wire networks is made possible by the fact that, for a fixed input $d\ge 0$ to the network, the value $x_k(b,d)$ at any node $k$ is a non-negative and non-increasing function of the bias parameters $b$. This, combined with the fact (lemma \ref{lem:indicate}) that zero is always a feasible value on any of the output nodes, motivates the following definition of a classifier.

\begin{defn}\label{def:classrule}
The output nodes $C$ of a rectified wire network serve as a classifier of $|C|$ classes of inputs when there are biases $b$ such that for inputs $d$ in class $c\in C$, $x_c(b,d)=0$ and $x_i(b,d)>0$ for $i\in C\setminus\{c\}$. 
\end{defn}

To see why this definition makes sense in the context of conservative learning, suppose that at some stage of training all the output nodes are positive for some input $d$ in class $c$. By increasing the biases $b$, the output value $x_c(b,d)$ can always be driven to zero for input $d$, and by increasing them as little as possible, i.e.~conservatively, there is a good chance the other output nodes can be kept positive. If some of the other outputs are also set to zero in the process of increasing biases, the classification is ambiguous. In any case, and after any amount of subsequent training where biases are increased, monotonicity of $x_c(b,d)$ with respect to $b$ ensures $x_c(b,d)=0$ continues to be a valid indicator for class $c$.

The outputs of the proposed classifier are counterfactual in the sense that a large positive value $x_c$ on the node for class $c$ represents strong evidence that $c$ is not the correct class. In the case of a class defined by the truth value of a Boolean function, we would use the function that returns \textsc{false} for members of the class, so the value of the corresponding analog circuit output node is zero.

We shall see that we do not need to separately insist that, during training, biases are never decreased, as it will follow automatically from the conservative learning principle which assigns the following cost when biases are changed:
\begin{defn}
The cost, in a rectified wire network, for changing the biases from $b$ to $b'$, is the $p$-norm $\|b'-b\|_p$.
\end{defn}

The uniqueness of the cost-minimizing bias change, for fixing a misclassification, depends on the norm exponent $p$. We mostly use $p=2$ in this study, where uniqueness can be proved. Our definition of the conservative learning rule, which sidesteps uniqueness, is the following:

\begin{defn}
Suppose output node $c\in C$ for class $c$ has value $x_c(b,d)>0$ for an input $d$, in a network with biases set at $b$. A conservative bias update is any $b'$ that minimizes $\|b'-b\|_p$ and produces output value $x_c(b',d)=0$.
\end{defn}

\begin{lem}\label{lem:lem1}
If $b'$ is any conservative update of $b$, then $b'\ge b$.
\end{lem}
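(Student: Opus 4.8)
The plan is to exploit the two structural facts recorded at the start of this section: for a fixed input $d\ge 0$, each node value $x_c(b,d)$ is non-negative and non-increasing in the bias vector $b$. The strategy is a ``truncate the decreases'' argument. Given any conservative update $b'$, I would form the componentwise maximum $b''=b'\vee b$, i.e.\ $b''_{i\to j}=\max(b'_{i\to j},\,b_{i\to j})$, which by construction satisfies $b''\ge b$ and $b''\ge b'$. The goal is to show that $b''$ is at least as good a candidate as $b'$, and strictly better unless $b'\ge b$ already, contradicting the optimality of $b'$.

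First I would check feasibility of $b''$. Since $b''\ge b'$ componentwise and $x_c(\cdot,d)$ is non-increasing, monotonicity gives $x_c(b'',d)\le x_c(b',d)=0$; combined with non-negativity $x_c(b'',d)\ge 0$, this forces $x_c(b'',d)=0$. Hence $b''$ also meets the deactivation constraint and is a legal competitor in the same minimization. Next I would compare costs edge by edge. For each edge, $|b''_{i\to j}-b_{i\to j}|=\max(b'_{i\to j}-b_{i\to j},\,0)\le |b'_{i\to j}-b_{i\to j}|$, with equality precisely when $b'_{i\to j}\ge b_{i\to j}$ and strict inequality when $b'_{i\to j}<b_{i\to j}$. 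Suppose, for contradiction, that some component satisfies $b'_{i\to j}<b_{i\to j}$. Then for $1\le p<\infty$,
\[
\|b''-b\|_p^p=\sum_{i\to j}\max(b'_{i\to j}-b_{i\to j},0)^p<\sum_{i\to j}|b'_{i\to j}-b_{i\to j}|^p=\|b'-b\|_p^p,
\]
since every term is no larger and at least one is strictly smaller. This contradicts the minimality of $b'$, so no component can decrease and $b'\ge b$.

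The step I expect to carry the real weight is the feasibility check: it is the conjunction of monotonicity and non-negativity --- not either alone --- that pins $x_c(b'',d)$ to exactly zero and makes $b''$ a valid competitor. The one place the argument is delicate is the strictness of the cost comparison, which uses $1\le p<\infty$; for $p=\infty$ a ``slack'' bias on an edge that does not feed the output $c$ could be lowered within the optimal $\ell_\infty$-ball at no extra cost, so the conclusion $b'\ge b$ should be understood for finite $p$ (in particular the $p=2$ case used throughout).
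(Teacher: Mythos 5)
Your proof is correct and follows essentially the same route as the paper's: the paper also truncates the decrease (raising a single offending component back to $b_{i\to j}$, where you raise all of them at once via $b''=b'\vee b$), invokes monotonicity to keep $x_c(b'',d)=0$, and derives a contradiction from the strictly lower cost. Your added remarks on why non-negativity is needed alongside monotonicity, and on the $p=\infty$ caveat, are sensible refinements but do not change the argument.
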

\begin{proof}
Biases $b'$ have just learned an input $d$ in some class $c\in C$, so $x_c(b',d)=0$. Now suppose $b'_{i\to j}<b_{i\to j}$ for some edge $i\to j\in E$. The update $b''$, where $b''_{i\to j}=b_{i\to j}$ and all others are unchanged, has lower cost than $b'$ and yet, by monotonicity of node values with respect to $b$, all output nodes with value zero will still be zero, including $x_c(b'',d)$. 
\end{proof}

The outputs of a rectified wire network are compositions of rectifier functions. Composition of convex functions are in general not convex, but for the rectifier function this is true. We give an inductive proof based on the following.

\begin{lem}\label{lem:convlemma}
If $x(b):\mathbb{R}^K\to\mathbb{R}$ is convex, then $y(a,b):(\mathbb{R},\mathbb{R}^K)\to\mathbb{R}$, defined by
\[
y(a,b)=\max{(0,x(b)-a)},
\]
is also convex.
\end{lem}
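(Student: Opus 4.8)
The plan is to recognize $y$ as the composition of the scalar rectifier $\phi(t)=\max(0,t)$ with the inner function $g(a,b)=x(b)-a$, and then to invoke two elementary facts: that $g$ is jointly convex on $\mathbb{R}^{K+1}$, and that $\phi$ is convex and non-decreasing. First I would check the joint convexity of $g$. By hypothesis $x$ is convex in $b$, and since it does not depend on $a$ it remains convex when viewed as a function of the enlarged variable $(a,b)$; the term $-a$ is affine, hence convex; and a sum of convex functions is convex. Thus $g(a,b)=x(b)-a$ is convex in $(a,b)$.

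To keep the argument self-contained I would then verify the defining inequality for $y$ directly rather than merely quoting a composition rule. Fix points $(a,b)$ and $(a',b')$ and $\lambda\in[0,1]$, and write $a_\lambda=\lambda a+(1-\lambda)a'$ and $b_\lambda=\lambda b+(1-\lambda)b'$. Convexity of $x$ gives $x(b_\lambda)\le \lambda x(b)+(1-\lambda)x(b')$, so after subtracting $a_\lambda$,
\[
x(b_\lambda)-a_\lambda \;\le\; \lambda\bigl(x(b)-a\bigr)+(1-\lambda)\bigl(x(b')-a'\bigr).
\]
Here the two key properties of the rectifier come in. Because $\phi$ is non-decreasing, applying it to both sides preserves the inequality; because $\phi$ is convex, $\phi$ of the convex combination on the right is bounded by the corresponding convex combination of the values $\phi(x(b)-a)$ and $\phi(x(b')-a')$. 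Chaining these two steps yields $y(a_\lambda,b_\lambda)\le \lambda y(a,b)+(1-\lambda)y(a',b')$, which is exactly the convexity of $y$.

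The main point to be careful about --- hardly a genuine obstacle, but the one place a naive argument could slip --- is the order in which the rectifier's two properties are used: monotonicity must be applied first, to push the already-established convexity inequality for $g$ through $\phi$, and only then may convexity of $\phi$ be used to split the maximum. One cannot split the maximum first, since $\max(0,x(b_\lambda)-a_\lambda)$ is not manifestly a convex combination of anything until the inner inequality has been invoked. I would close by noting why this lemma is exactly the inductive step needed for the surrounding claim: an edge output $\max(0,x_i-b_{i\to j})$ applies the rectifier to a node value with a fresh bias, and a node value $x_j=w_j\sum_{i\to j}y_{i\to j}$ is a non-negative weighted sum of edge outputs. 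Since non-negative weighted sums preserve convexity and input nodes are constant in the biases, induction over the topological order of the acyclic graph propagates convexity in $b$ from the inputs all the way to the output nodes.
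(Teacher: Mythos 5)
Your proof is correct and follows essentially the same route as the paper's: both first push the convexity inequality for $x(b)-a$ through the rectifier using its monotonicity, and then split the resulting convex combination using the convexity of $t\mapsto\max(0,t)$ (which the paper derives explicitly via subadditivity and positive homogeneity rather than quoting). Your remark about the necessary order of the two steps matches the structure of the paper's argument exactly.
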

\begin{proof}
Because $x$ is convex, for arbitrary $b_1,b_2\in\mathbb{R}^K$ we have
\begin{equation}\label{xconv}
x(\lambda b_1+(1-\lambda)b_2)\le \lambda\, x(b_1)+(1-\lambda)\,x(b_2)
\end{equation}
for $0\le\lambda\le 1$. For arbitrary $a_1,a_2\in\mathbb{R}$,
\[
\begin{split}
y\left(\lambda a_1+(1-\lambda)a_2,\right.&\left.\lambda b_1+(1-\lambda)b_2\right)\\
&=\max{\left(0,x(\lambda b_1+(1-\lambda)b_2)-\lambda a_1-(1-\lambda)a_2\right)}\\
&\le\max{\left(0,\lambda\, x(b_1)+(1-\lambda)\,x(b_2)-\lambda a_1-(1-\lambda)a_2\right)}\\
&=\max{\left(0,\lambda\,z_1+(1-\lambda)\,z_2\right)},
\end{split}
\]
where we have used \eqref{xconv} and defined
\begin{align*}
z_1&=x(b_1)-a_1\\
z_2&=x(b_2)-a_2.
\end{align*}
Since for arbitrary $s,t\in\mathbb{R}$
\[
\max{(0,s+t)}\le\max{(0,s)}+\max{(0,t)},
\]
we obtain
\begin{align*}
\max{\left(0,\lambda\,z_1+(1-\lambda)\,z_2\right)}&\le\max{(0,\lambda\,z_1)}+\max{(0,(1-\lambda)\,z_2)}\\
&=\lambda\max{(0,z_1)}+(1-\lambda)\max{(0,z_2)}\\
&=\lambda\,y(a_1,b_1)+(1-\lambda)\,y(a_2,b_2)
\end{align*}
as claimed.
\end{proof}

\begin{thm}\label{thm:convbias}
The node values $x(b,d)$ of a rectified wire network, given an input $d$, are convex functions of the bias parameters $b$.
\end{thm}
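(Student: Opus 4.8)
The plan is to prove this by structural induction on the directed acyclic graph, working from the input nodes toward the output nodes in topological order. The base case consists of the input nodes: for a fixed input $d$, each input value $x_i(b,d)=d_i$ is constant in $b$, hence trivially convex. The inductive step must show that if every node feeding into a given node $j$ has a node value that is convex in $b$, then $x_j(b,d)$ is also convex in $b$. Since a topological ordering of the DAG guarantees that all predecessors of $j$ are processed before $j$, this induction is well-founded.

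For the inductive step I would unpack the definition of the node value. By Definition \ref{def:rectwire}, we have $x_j=w_j\sum_{i\to j}y_{i\to j}$ where $y_{i\to j}=\max{(0,x_i-b_{i\to j})}$. The inductive hypothesis gives that each $x_i(b,d)$ is convex in the full bias vector $b$. Lemma \ref{lem:convlemma} is exactly the tool for handling a single edge: taking the convex function $x_i(b,d)$ and subtracting the bias coordinate $b_{i\to j}$ inside a rectifier yields $y_{i\to j}$, which is therefore convex in $(b_{i\to j},b)$, and hence convex in $b$. Then I would invoke two elementary closure properties of convex functions: a nonnegative weighted sum of convex functions is convex (here using $w_j>0$), and this handles the summation over all incoming edges. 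Thus $x_j(b,d)$ is convex, completing the induction.

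The main subtlety to address carefully is matching the variable structure of Lemma \ref{lem:convlemma} to the situation at hand. The lemma is stated for a function $x(b):\mathbb{R}^K\to\mathbb{R}$ with a \emph{separate} scalar argument $a$, producing convexity in $(a,b)$; but in our network the bias $b_{i\to j}$ is one particular coordinate of the same global bias vector $b$ on which $x_i$ already depends. I would note that convexity of $x_i$ in $b$ implies convexity in the extended list of arguments that treats $b_{i\to j}$ as a distinguished coordinate (a convex function of several variables remains convex when we merely single out one coordinate for special treatment), so the lemma applies and delivers convexity of $y_{i\to j}$ jointly in $b$. This bookkeeping is the only place where the argument could go wrong, since both the sum rule and the positivity of $w_j$ are standard; once the edge-wise convexity is correctly established, the remaining steps are routine.
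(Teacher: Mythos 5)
Your proof is correct and follows essentially the same route as the paper's: induction over a topological ordering of the DAG, Lemma \ref{lem:convlemma} applied to each edge rectification, and closure of convexity under positive-weighted sums for the node value. The bookkeeping point you worry about is resolved more cleanly by acyclicity---$x_i$ cannot depend on $b_{i\to j}$ because that edge lies downstream of node $i$, so the lemma applies verbatim with $a=b_{i\to j}$ as a genuinely separate scalar and the remaining biases as the vector argument, which is exactly the observation the paper makes.
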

\begin{proof}
The values of the input nodes $D$, set to the constant values $x_i=d_i, i\in D$, are trivially convex. Since our network is an acyclic graph, the nodes can be indexed by consecutive integers so that all edge outputs $y_{i\to j}$ incident on a node $j\in H\cup C$ are from input nodes $i\in D$ or from hidden nodes $i\in H$ that have $i<j$. For any $j\in H\cup C$, we use induction and suppose all $x_i$, $i\in H$ with $i<j$ are convex functions of the bias parameters. Consider any of the edge outputs $y_{i\to j}(b)$ incident on $j$:
\[
y_{i\to j}(b)=\max{(0,x_i(b)-b_{i\to j})}.
\]
Seen as a function of bias parameters, where $x_i(b)$ is a convex function of biases not including $b_{i\to j}$, $y_{i\to j}(b)$ fits the hypothesis of lemma \ref{lem:convlemma} and is therefore convex. This conclusion also applies to the base case, the hidden node numbered $j=1$, where all the $x_i$ are inputs $i\in D$ and constant. Since the node value
\[
x_j(b)=w_j\sum_{i\to j}y_{i\to j}(b)
\]
is a positive multiple of a sum of convex functions, it too is convex and completes the induction.
\end{proof}

For the case $p=2$, we can prove uniqueness of the conservative bias update.
\begin{cor}
The set $B(d)$, of biases $b$ of a rectified wire network with fixed
input $d$ for which $x_c(b,d)=0$ for a given output node $c$, is
non-empty and convex. Consequently, the conservative bias update
$b'\in B(d)$ that minimizes the cost $\|b'-b\|_2$ with respect to the
biases $b$ is unique.
\end{cor}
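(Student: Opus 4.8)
The plan is to establish three things in turn: that $B(d)$ is non-empty, that it is closed and convex, and that Euclidean projection onto it is unique. Non-emptiness is immediate from lemma \ref{lem:indicate}, which exhibits an explicit bias setting (with $b_{i\to c}=x_i$ on the edges incident on $c$ and all other biases zero) for which $x_c(b,d)=0$; this point already lies in $B(d)$.

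For convexity, the key observation is that the \emph{equality} constraint defining $B(d)$ can be replaced by an \emph{inequality}. Since the input satisfies $d\ge 0$, every node value is non-negative: each edge output $y_{i\to j}=\max(0,x_i-b_{i\to j})\ge 0$, and each $x_j=w_j\sum_{i\to j}y_{i\to j}$ is a positive multiple of a sum of non-negative terms. In particular $x_c(b,d)\ge 0$ for all $b$, so
\[
B(d)=\{b:\,x_c(b,d)=0\}=\{b:\,x_c(b,d)\le 0\}.
\]
By theorem \ref{thm:convbias}, $x_c(\cdot,d)$ is convex, hence continuous on $\mathbb{R}^{|E|}$, so this set is a sublevel set of a convex function and is therefore both convex and closed. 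I expect this level-set-to-sublevel-set reduction to be the only genuinely substantive step: the level set $\{b:x_c(b,d)=0\}$ of a convex function need not be convex in general, and it is precisely the non-negativity of node values that rescues the argument.

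It remains to deduce uniqueness of the cost-minimizing $b'$. Existence follows from coercivity: $\|b'-b\|_2\to\infty$ as $\|b'\|_2\to\infty$, so the infimum over the closed set $B(d)$ is attained on the intersection of $B(d)$ with a sufficiently large closed ball, which is compact. For uniqueness, suppose $b_1'$ and $b_2'$ both attain the minimum distance $m=\|b_i'-b\|_2$. By convexity of $B(d)$ their midpoint $\bar b=\tfrac12(b_1'+b_2')$ again lies in $B(d)$, and strict convexity of the Euclidean norm (equivalently, the parallelogram law) gives $\|\bar b-b\|_2<m$ whenever $b_1'\neq b_2'$, contradicting minimality. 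Hence $b_1'=b_2'$ and the conservative update is unique. This last step is the standard uniqueness-of-projection argument onto a non-empty closed convex set and presents no real difficulty once convexity is in hand.
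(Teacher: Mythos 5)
Your proposal is correct and follows essentially the same route as the paper: non-emptiness from lemma \ref{lem:indicate}, convexity by rewriting $B(d)$ as the sublevel set $\{b: x_c(b,d)\le 0\}$ of the convex function from theorem \ref{thm:convbias} (using non-negativity of node values), and uniqueness by the standard midpoint argument. Your additional remarks on closedness and on existence of the minimizer via coercivity are welcome details the paper leaves implicit, but they do not change the argument.
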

\begin{proof}
The set $B(d)$ may be defined as the feasible set of $x_c(b,d)\le 0$ (since this function is non-negative). By lemma \ref{lem:indicate} $B(d)$ is non-empty, and by theorem \eqref{thm:convbias} $B(d)$ is closed and convex because $x_c(b,d)$ is a convex function of $b$. Now suppose $b'_1\in B(d)$ and $b'_2\in B(d)$ are distinct minimizers of $\|b'-b\|_2$ for some bias $b$. Because $B(d)$ is convex we have a contradiction because $b'_3=(b'_1+b'_2)/2\in B(d)$ and $\|b'_3-b\|_2<\|b'_1-b\|_2=\|b'_2-b\|_2$ implies $b'_1$ and $b'_2$ were not minimizers.
\end{proof}

To design algorithms for computing the conservative bias update and
to establish the complexity of this task we cast the problem as a mathematical program. Consider a rectified wire network with edges $E$ on which we are given bias parameters $b$, say from previous training. Suppose we now wish to learn the pair $(c,d)$, a vector of input data $d$ in the class associated with output node $c$. Unless $x_c(b,d)=0$, the biases must be conservatively updated to $b'$ so this is true. Before we do this we partition $E$ into the set of \textit{active} edges $A$ and its complement, $\bar{A}$.

\begin{defn}
The set $A$ of active edges of a rectified wire network, given input data $d$ and bias parameters $b$, are those edges $i\to j$ where $x_i(b,d)>b_{i\to j}$, or equivalently, where $y_{i\to j}(b,d)>0$.
\end{defn}

The biases $b_{i\to j}$ on the inactive edges $i\to j\in\bar{A}$, when increased, have no effect on any node values, in particular the output nodes, because the corresponding edge-outputs $y_{i\to j}$ are zero by monotonicity of $x(b)$ (non-increasing). To find $b'$ we may therefore work with the network induced by the active edges $A$, called the ``active network."

Our mathematical program makes use of a set of \textit{reduced bias} variables.
\begin{defn}
Let $A$ be the active edges of a rectified wire network for input data $d$ and bias parameters $b$. For each edge $i\to j$ of the network, define the reduced bias by
\[
b^-_{i\to j}=\left\{
\begin{array}{rl}
b_{i\to j},& {i\to j}\in A\\
x_i(b,d),&{i\to j}\in \bar{A}.
\end{array}
\right.
\]
\end{defn}
Now consider a conservative update $b'$ of $b$, and the reduced biases $b^-$ for $b'$. We always have $b^-\le b'$, and the reduced values have the property that $x(b^-,d)=x(b',d)$. In particular, for the class output node $c$ we have $x_c(b^-,d)=x_c(b',d)=0$. The only reason that a conservative update $b'$ of $b$ would not already be reduced ($b'=b^{-}$) is that the freedom $b^-\le b'$ may allow a reduction in cost, that is, give a more conservative update.

We now define the mathematical program. For given data $d$, biases $b$ and weights $w$ on a rectified wire network, we are given the corresponding active edges $A$ and node $c$ of the correct class. The unknowns are the updated biases $b'$, reduced biases $b^-$, and edge-outputs $y$ on $A$, and the values of $x$ on $c$ and the input and hidden nodes:
\begin{subequations}
\label{MP}
\begin{align}
\mbox{minimize:  }&\|b'-b\|_p^p&&\label{mathprog1}\\[10pt]
\mbox{such that:  }&b^-\le b',&&\label{mathprog2}\\
&0\le b^-,&&\label{mathprog3}\\
&x_i=d_i,&&i\in D\label{mathprog4}\\
&y_{i\to j}=x_i-b^-_{i\to j},&&i\to j\in A\label{mathprog5}\\
&0\le y_{i\to j},&&i\to j\in A\label{mathprog6}\\
&x_j=w_j\sum_{i\to j\in A}y_{i\to j},&&j\in H\cup\{c\}\label{mathprog7}\\
&x_c=0.\label{mathprog8}&&
\end{align}
\end{subequations}

\begin{thm}\label{thm:MP}
The mathematical program \eqref{MP}, defined for the network $A$ that is active for given biases $b$ and data $d$, as part of its solution $b'$ gives a conservative bias update for the class associated with output node $c$.
\end{thm}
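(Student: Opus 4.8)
The plan is to show that the optimal value of \eqref{MP} equals the conservative cost and that the recovered $b'$ is a conservative update, by establishing matching inequalities between the two optimization problems. First I would record the structural facts: the objective \eqref{mathprog1} is convex and the constraints \eqref{mathprog2}--\eqref{mathprog8} are linear in $(b',b^-,y,x)$, so \eqref{MP} is a convex program; it is feasible (a feasible point is constructed below from any bias vector achieving $x_c=0$, which exists by Lemma \ref{lem:indicate}); and since $\|b'-b\|_p^p\to\infty$ as $\|b'\|\to\infty$, a minimizer is attained. I would also fix the reduction to the active network: by Lemma \ref{lem:lem1} any conservative update satisfies $b'\ge b$, and on the inactive edges $\bar A$ raising a bias leaves every node value unchanged, so a cost minimizer keeps $b'=b$ on $\bar A$ at zero cost. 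This justifies optimizing only over $A$ and matches the sums in \eqref{mathprog7}.

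For the first inequality, I would take an actual conservative minimizer $b^*$ and build a feasible point of \eqref{MP} of equal cost. Set $b'=b^*$ on $A$, let $b^-$ be the reduced bias of $b^*$, and let $x,y$ be the node values and edge outputs of the actual network at $b^*$. Constraints \eqref{mathprog2}--\eqref{mathprog3} hold because $b^-\le b^*$ and node values are nonnegative; \eqref{mathprog4}--\eqref{mathprog8} hold because $x(b^-,d)=x(b^*,d)$ and, for every edge of $A$, either it is still active under $b^*$ (so $b^-=b^*$ and $y=x_i-b^-$) or it has become inactive (so $y=0$ and $b^-=x_i$, again giving $y=x_i-b^-$), while edges of $\bar A$ carry $y=0$ under $b^*\ge b$ and drop out of the sum. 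Since $b^*=b$ on $\bar A$, the objective equals $\|b^*-b\|_p^p$, the conservative cost, so the optimal value of \eqref{MP} is at most the conservative cost.

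For the reverse inequality, I would take an optimal $(b',b^-,y,x)$ of \eqref{MP} and extend $b'$ by $b'=b$ on $\bar A$. At optimality the only constraint on $b'$ is \eqref{mathprog2}, so componentwise $b'=\max(b,b^-)\ge b$, hence $b'\ge b$ globally. Monotonicity then gives $x_i(b',d)\le x_i(b,d)\le b_{i\to j}=b'_{i\to j}$ on each $i\to j\in\bar A$, so these edges are inactive in the actual network and contribute nothing. The crux is an induction on the acyclic node order showing $x_j(b',d)\le x_j^{\mathrm{MP}}$ for every node: the two agree at an input node, and for $j\in H\cup\{c\}$, using $b^-\le b'$, $y\ge 0$, and the inductive bound, each actual edge output $\max(0,x_i(b',d)-b'_{i\to j})$ is dominated by the program's $y_{i\to j}=x_i^{\mathrm{MP}}-b^-_{i\to j}$, so the weighted sums obey the same inequality. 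In particular $x_c(b',d)\le x_c^{\mathrm{MP}}=0$, and since node values are nonnegative, $x_c(b',d)=0$; thus $b'$ is feasible for the conservative problem with cost equal to the program's optimal value, giving the reverse inequality.

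Combining the two inequalities shows the optimal value of \eqref{MP} equals the conservative cost and that its $b'$ (extended over $\bar A$) is a genuine conservative update, proving Theorem \ref{thm:MP}. I expect the main obstacle to be exactly the reduced-bias bookkeeping in the inductive step: the program deliberately replaces the nonsmooth relation $y_{i\to j}=\max(0,x_i-b'_{i\to j})$ by the linear $y_{i\to j}=x_i-b^-_{i\to j}$ with $b^-\le b'$, and one must check this relaxation neither under- nor over-counts the true output. The argument goes through because the relaxation can only inflate edge outputs, which together with the hard constraint $x_c=0$ and nonnegativity pins the actual output at zero, while conversely the reduced bias of the true minimizer realizes the relaxation exactly, so no slack is lost in the cost.
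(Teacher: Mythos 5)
Your proof is correct and follows essentially the same route as the paper's: one direction shows that the reduced-bias linear system \eqref{mathprog3}--\eqref{mathprog8} exactly captures the valid $x_c=0$ configurations of the network (so the true conservative minimizer yields a feasible point of equal cost), and the other shows that any $b'$ satisfying \eqref{mathprog2} inherits $x_c(b',d)=0$ by monotonicity. Your explicit induction establishing $x_j(b',d)\le x_j^{\mathrm{MP}}$ is simply a careful unpacking of the monotonicity step that the paper states in one line, so there is nothing further to add.
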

\begin{proof}
First observe that any valid ($x_c=0$) assignment of node values $x$ and edge-outputs $y$, in the corresponding rectified wire network, is  realized by a feasible point of the linear system \eqref{mathprog3}~--~\eqref{mathprog8} involving just the reduced biases (not $b'$). For the variables $b'$  to be rectified wire biases consistent with the reduced biases $b^-$, it is sufficient for them to satisfy \eqref{mathprog2}, as monotonicity ensures $x_c(b')=0$. Optimizing \eqref{mathprog1} subject to \eqref{mathprog2} and the constraints that define the feasible set of reduced biases, gives the most conservative update.
\end{proof}

For $p=2$, \eqref{MP} is a positive semi-definite quadratic program
and can be solved in time that grows polynomially
\citep{kozlov1980polynomial} in the size of the active network, $|A|$. The same conclusion applies for $p=1$, a linear program, because the objective can be replaced by the sum of all the updated biases $b'$ provided we impose the result of lemma \eqref{lem:lem1}, $b\le b'$, as a constraint. While these results allow us to add rectified wire networks to the list of models for which there is a tractable conservative learning rule, in most applications it is also important that training scales nearly linearly with the size of the network. The \textit{sequential deactivation algorithm} (\textsc{SDA}) described in the next section is designed to meet that goal.

We close this section by reviewing the features of our model that make training tractable. The insistence on exactly learning individual items should not be counted as a tractability-enabling feature. Indeed, it is easy to see how the mathematical program \eqref{MP}, to compute the conservative update for one data item, would be generalized to find the norm-minimizing update that classifies an entire mini-batch. All the variables and data, with the exception of the biases $b$ and $b'$, now carry a data index $m\in M$, where $M$ is a mini-batch. For example, \eqref{mathprog2} would be replaced by
\[
{b^-}_{i\to j, m}\le {b'}_{i\to j}\qquad i\to j\in A,\; m\in M
\]
and imposes consistency of the bias updates for all instantiations of the reduced biases over the mini-batch. While the size of the mathematical program has grown to $O(|A||M|)$ equations and inequalities, it is still tractable.

Convexity of the variables $x$ and $y$ with respect to the bias parameters is clearly important for tractable learning, and the proofs of lemma \ref{lem:convlemma} and theorem \ref{thm:convbias} show that this relies on the non-negativity of the static weights and the form of the activation function. Less obvious in facilitating tractability is the proposal to replace the loss function of neural networks by a constraint, and in particular, the simple constraint \eqref{mathprog8}. For example, one might consider replacing this single constraint with the following set of inequalities,
\begin{equation}\label{marginconstraint}
x_i>x_c+\Delta,\quad i\in C\setminus {c},
\end{equation}
where the fixed positive parameter $\Delta$ specifies a margin for avoiding ambiguous classification. After all, we still have a tractable mathematical program after this substitution. Unfortunately, this proposal creates a conflict in the relationship between the reduced biases $b^-$ and the actual bias updates $b'$. While the $b^-$ variables provide an exhaustive parameterization of the node variables $x$, even under the constraint \eqref{marginconstraint}, we cannot count on the inequality $b^-\le b'$ to preserve this constraint when $b^-$ biases are replaced by $b'$ biases in the rectified wire model (the monotonic decrease of a non-class output may exceed that of the class output).

\section{Sequential deactivation algorithm}\label{sec:sda}

This section describes in detail the \textsc{SDA} algorithm for computing bias updates in the rectified wire model. While these updates are not guaranteed to be the most conservative possible, their computation is significantly faster than solving a general quadratic program. The SDA algorithm resembles the gradient methods used for optimizing standard model networks, the most widely used being stochastic gradient descent (\textsc{SGD}) \citep{rumelhart1986learning}.

The program \eqref{MP} would be greatly simplified if inequality \eqref{mathprog6} was somehow automatically satisfied. All active edges would remain active in the update and the updated biases $b'$ on them would be equal to the reduced biases $b^-$. The node variables $x$, in particular $x_c$, would be linear functions of the $b'$:
\[
x_c(b')=x_c(b)+\nabla x_c\cdot (b'-b).
\]
For the $p=2$ norm on bias updates, the optimal update that achieves $x_c=0$ is then
\[
b'=b-\frac{x_c(b)}{\|\nabla x_c\|^2}\nabla x_c.
\]
The conservative update, in this simplification, is the same as gradient descent.

In the general case, where edges may become inactive, from the monotonicity of $x_c$ we know at least that the gradient $\nabla x_c$ never has positive components. If we had the stronger property, that $\nabla x_c$ is also never zero while $x_c>0$, then gradient descent will always find an update $b^*$ that satisfies $x_c(b^*)=0$. But the latter property follows immediately from the fact that $x_c>0$ is only possible when there are active edges incident to output node $c$, so that  $x_c$ has non-zero derivatives at least with respect to the biases on them. 

While gradient descent cannot guarantee the norm minimizing update promised by the solution of program \eqref{MP}, it has some attractive features. First, the computation of the (piecewise constant) gradient is fast, as is the computation of the step size to the next gradient discontinuity (deactivation event). Second, there is an opportunity to make the gradient descent update $b^*$ more conservative on the inactive edges. Using the condition $y_{i\to j}(b^*)=0$ as the indicator of an inactive edge, the improved update is, for all $i\to j\in A$,
\begin{equation}\label{improvedupdate}
b'_{i\to j}=\left\{
\begin{array}{rl}
\max\left(b_{i\to j}, x_i(b^*)\right), & y_{i\to j}(b^*)=0\\
b^*_{i\to j}, &y_{i\to j}(b^*)>0.
\end{array}
\right.
\end{equation}
The SDA algorithm is the efficient implementation of gradient descent on the rectified wire model followed by \eqref{improvedupdate}. Its name draws attention to the fact that the steps in the descent are defined by deactivation events.

We now turn to the implementation of the SDA algorithm. The elementary operations are defined in Algorithm \ref{alg1}. Procedure \textsc{eval} is simply a forward-pass through the network, from data supplied at the input nodes, to the output nodes whose values directly encode the class. In addition to the output node values $\{x_i: i\in C\}$, \textsc{eval} also provides the edge outputs $y$ and the set of active edges $A$ (on which the edge outputs are non-zero).

Suppose the current bias parameters are $b(0)$. Denote by $(-\nabla x_c)_{i\to j}$ the negative gradient component for the bias on edge $i\to j\in A$. Because the derivatives with respect to all the $b_{i\to j}$ on active edges into the same node $j$ are equal, the node-indexed variables
\[
(-\nabla x_c)_j\coloneqq (-\nabla x_c)_{i\to j},\quad i\to j\in A
\]
are well defined. In particular, if $j$ is an output node, then $(-\nabla x_c)_j=w_c\,\delta_{j c}$. As long as the gradient is constant, the biases evolve as
\begin{equation}\label{biasevolve}
b_{i\to j}(t)=b_{i\to j}(0)+t (-\nabla x_c)_j,
\end{equation}
where $t$ is a continuous ``time".

The gradient $(-\nabla x_c)_i$ is positive only if node $i$ is connected by active edges to node $c$, with value given by the sum over all paths on active edges, each contributing by the product of weights along the path. The effect on $x_c$ of a bias change on an active edge into $i$ is the same as if the same bias change was instead applied to all the active edges leaving node $i$, but multiplied by $w_i$. This implies the recursion,
\begin{equation}\label{gradrecursion}
(-\nabla x_c)_i= w_i\sum_{i\to j\in A}(-\nabla x_c)_j,\quad i\in H
\end{equation}
in the procedure \textsc{grad} of Algorithm \ref{alg1} and corresponds to back propagation in the network.

\algrenewcommand{\algorithmiccomment}[1]{\hfill#1}
\begin{algorithm*}[t!]
	\caption{Elementary network procedures}
	\begin{algorithmic}[0]
		\Procedure{eval}{$E$, $b$, $d$} $\rightarrow (A, x, y)$
		\State $x_i\leftarrow d_i,\quad i\in D$
		\State $x_j\leftarrow w_j\sum_{i\to j\in E}\;y_{i\to j},\quad j\in H\cup C$
		\State $y_{i\to j}\leftarrow \max(0,x_i-b_{i\to j}),\quad i\to j\in E$
		\Statex
		\State $A\leftarrow \{i\to j\in E: y_{i\to j}=0\}$
		\EndProcedure
		\Statex
		\Procedure{grad}{$A$, $c$} $\rightarrow \nabla x_c$
		\State $(-\nabla x_c)_j\leftarrow w_c\,\delta_{j c},\quad j\in C$
		\State $(-\nabla x_c)_i\leftarrow w_i\sum_{i\to j\in A}(-\nabla x_c)_j,\quad i\in H$
		\EndProcedure
		\Statex
		\Procedure{velocity}{$A$, $\nabla x_c$} $\rightarrow \dot y$
		\State $(-\dot{y})_{i\to j}\leftarrow (-\nabla x_c)_j+w_i\sum_{k\to i\in A}\; (-\dot{y})_{k\to i},\quad i\to j\in A$
		\EndProcedure
		
	\end{algorithmic}\label{alg1}
\end{algorithm*}

The third procedure, \textsc{velocity}, is derived from the recursion for the edge-outputs $y$:
\[
y_{i\to j}=w_i\sum_{k\to i\in A} y_{k\to i}-b_{i\to j}.
\]
Taking the time derivative and using \eqref{biasevolve},
\begin{equation}\label{velrecursion}
(-\dot{y})_{i\to j}=w_i\sum_{k\to i\in A} (-\dot{y})_{k\to i}+(-\nabla x_c)_j
\end{equation}
we obtain the velocities of the edge-outputs by forward propagation.
The initialization occurs at edges $i\to j$ from all the input nodes $i$, for which the sum in \eqref{velrecursion} is absent and $(-\nabla x_c)_j$ is set by \textsc{grad}.

By construction, both $y_{i\to j}(0)$ and the (constant) velocities $(-\dot{y})_{i\to j}$ on the active edges $i\to j\in A$ are positive and the first deactivation event occurs at time
\begin{equation}\label{stepsize}
t^*=\min_{i\to j\in A}\;\frac{y_{i\to j}(0)}{(-\dot{y})_{i\to j}}.
\end{equation}
This is the time step in one iteration of the SDA algorithm. After the biases are incremented by \eqref{biasevolve} and the newly deactivated edges are removed from $A$, another round is begun. Iterations are terminated when \textsc{eval} returns $x_c=0$. The final biases are obtained by applying \eqref{improvedupdate} to the gradient descent biases.

We now revisit the classification rule of definition \ref{def:classrule}, where data is declared learned when $x_c=0$. Recall that this has the desired property of not having the ``supremacy" (smallest value among all classes) of node $c$ spoiled by subsequent training. The only way that the latter can have a negative effect, through the general decrease in output values with increasing biases, is when output nodes other than $c$ are in a tie at value zero. To mitigate this effect, and also make the bias changes even more conservative, we introduce the \textit{ultra-conservative learning} rule:
\begin{defn}\label{def:ultracon}
In the ultra-conservative mode of learning with the \textsc{SDA} algorithm, iterations are terminated when $x_c$, the value of the output node $c$ of the correct class, is either zero or smaller than the values of all the other output nodes.
\end{defn}
This termination criterion is conservative from the point of view of testing. Consider the early stages of training, when we might want to test the algorithm on which class is being favored. The natural candidate for the latter is the output node with the smallest value. By terminating \textsc{sda} as soon as this criterion is met, the test (when performed right after training) will succeed with a smaller change to the biases than is required by the $x_c=0$ criterion.

Because the property of data $d$ producing the smallest output value on node $c$ can be spoiled by training with data $d'\neq d$, it may be necessary for the network to retrain on $d$, or data similar to $d$, in order to properly learn the combination $(d,c)$. The implied heuristic is that the net bias change for this mode of learning class $c$ may be more conservative than always insisting on output $x_c=0$ for every $d$ in this class. 

The \textsc{sda} algorithm with the ultra-conservative termination criterion is summarized in Algorithm \ref{alg2}. As soon as $x_c$ becomes the smallest output or is zero, the algorithm returns the most conservatively updated biases $b'$ consistent with the activity of the network edges, \eqref{improvedupdate}. Estimating the work required to correct a misclassification is complicated by two factors. While the work in one iteration scales linearly with the size of the active network, we do not know how many iterations are needed on average to satisfy the ultra-conservative termination rule. Empirically (section \ref{sec:exp}) we find the number of iterations is often quite small and depends only weakly on network size. A second effect, serving to lessen the work, is the sparsification of the active network with time. In any event, the total computation in learning each data item would be at most $O(|A|^2)$, since in each iteration at least one of the active edges is deactivated (and eliminated from subsequent rounds).

\algrenewcommand\algorithmicrequire{\textbf{input}}
\algrenewcommand\algorithmicensure{\textbf{output}}
\begin{algorithm*}[t!]
	\caption{Sequential Deactivation (\textsc{SDA})}
	\begin{algorithmic}[0]
		\Require $(E,b^0), (d,c)$\Comment{(network edges, initial biases), (data vector, class)}
		\Statex
		\State $b\leftarrow b^0$ \Comment{initialize biases}
		\State $(A,x,y)\leftarrow \mbox{\textsc{eval}}(E,b,d)$\Comment{active edges, node and edge output values}
		\State $\mathsf{iter}\leftarrow 0$\Comment{zero the iteration counter}
		\Statex
		\While{$x_c>0$ and $\argmin_{i\in C} x_i\ne\{c\}$}\Comment{iterate until node $c$ is smallest}
		\State $\nabla x_c\leftarrow \mbox{\textsc{grad}}(A,c)$\Comment{gradient with respect to active biases}
		\State $\dot{y}\leftarrow\mbox{\textsc{velocity}}(A,\nabla x_c)$\Comment{edge output velocity}
		\State $t^*\leftarrow\min_{i\to j\in A}\;y_{i\to j}/(-\dot{y}_{i\to j})$\Comment{step size}
		\Statex
		\For{$i\to j\in A$}
		\State $b_{i\to j}\leftarrow b_{i\to j} + t^*\,(-\nabla x_c)_j$\Comment{increase biases}
		\EndFor
		\Statex
		\State $(A,x,y)\leftarrow \mbox{\textsc{eval}}(A,b,d)$\Comment{new active edges, node and edge output values}
		\State $\mathsf{iter}\leftarrow\mathsf{iter}+1$\Comment{increment iteration counter}
		\EndWhile
		\Statex
		\For{$i\to j\in E$}
		\If{$i\to j\in A$}
		\State $b'_{i\to j}=b_{i\to j}$\Comment{keep biases on active edges}
		\Else
		\State $b'_{i\to j}=\max(b_{i\to j}^0,x_i)$\Comment{conservative update for inactive edges}
		\EndIf
		\EndFor
		\Statex
		\Ensure $b'$, $\mathsf{iter}$\Comment{updated biases, iteration count}
		\end{algorithmic}\label{alg2}
\end{algorithm*}

A possible direction for future research is the design of an algorithm that solves the program \eqref{MP}, for the most conservative update, with only a modest amount of extra work than used by SDA. In this connection we present the simplest instance we know of where the update found by SDA is non-optimal. Figure \ref{fig:fragment} shows a fragment of a network where the bias updates on four edges can be analyzed in isolation. We consider the original optimization problem \eqref{MP}, where the bias updates must give zero on node $c$. There may be additional edges incident on the nodes shown, but these are inactive for the data item being learned. Input node 1 has been zero for all previous data and explains why biases $b_{1\to 2}$, $b_{2\to 3}$ and $b_{2\to c}$ are still zero. The inactive edge incident on node 3 was active on previous data and accounts for the positive bias $b_{3\to c}=b$. The nodes all have unit weight except node 2, whose weight is $w$. Data vector component $d_1=1$ (on node 1) is the only one relevant for the fragment shown.

\begin{figure}[!t]
\begin{center}
\includegraphics[width=2in]{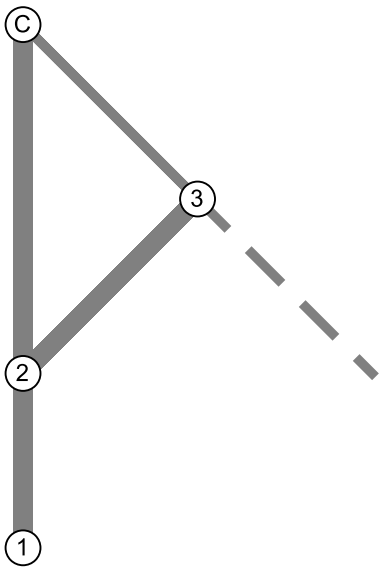}
\end{center}
\caption{A network fragment on which the SDA update is not optimal. The initial bias on the thick edges is zero, while $b_{3\to c}=b$. Another parameter is the weight $w$ of node 2; all other nodes have unit weight. The dashed edge is inactive initially and irrelevant, but was responsible for $b>0$ from prior training.}
\label{fig:fragment}
\end{figure}

It is easy to see that for the case $w<b$ the SDA solution is optimal, that is, it coincides with the solution of \eqref{MP}. However, with a bit of work one can show that the SDA updates are suboptimal when $w>b$, even when taking advantage of \eqref{improvedupdate}. This example only shows there is room for improvement. We do not know how pervasive this type of instance is in practice, let alone how seriously it impacts learning.

We close this section with the observation that SDA is not that different from SGD applied to the loss function $x_c$. The main differences are that the step size (learning rate) is not a free parameter but set by deactivation events and that multiple steps may be executed when learning each data item. The benefit imparted by the rectified wire model, in this context, is just that the gradient computations and step sizes are very easy to compute. If we set aside monotonicity and solution guarantees, these attributes carry over to loss functions that are much stronger than $x_c$.

To get a sense of how small a change moves the model into difficult territory, consider the problem of learning the truth value of a Boolean function $f(z)$ on $N$ variables generated by $M$ binary \textsc{and}/\textsc{or} gates, as discussed in section \ref{sec:RWnets}. The rectified wire network would receive $2N$ doubled inputs and have output nodes $f$ and $\bar{f}$, trained so that $(x_f,x_{\bar{f}})=(f(z),\bar{f}(z))$. From corollary \eqref{cor:cor1} we know such a rectified wire network exists, with unit weights and having at most $7M$ hidden nodes. The outstanding question, of course, is whether a gradient based algorithm can find the appropriate bias settings.

Instead of the monotone loss function $x_{f(z)}$, fraught with the problem that training cannot prevent $(x_f,x_{\bar{f}})=(0,0)$ --- an ambiguity --- we might try the hinge-loss function
\[
\max\left(0,\;x_{f(z)}-x_{\bar{f}(z)}+\Delta\right).
\]
By corollary \eqref{cor:cor1} we know that it is possible to achieve zero loss over all $z$ with margin $\Delta=1$, and that the data will be correctly and unambiguously classified. Like $x_{f(z)}$, this loss function is piecewise linear on the rectified wire model and can likewise benefit from fast gradient and step size calculations. Because this loss is no longer monotone decreasing in the bias parameters, there is now no reason to initialize biases at zero. The loss of monotonicity, of course, brings with it the zero gradient problem. We suspect this may be more serious for rectified wires  than it is in standard network models.

\section{Clause learning}\label{sec:clause}

Although the SDA algorithm provides a tractable computation for learning individual data in the rectified wire model,
we do not yet have
any reason to believe this algorithm can learn any interesting functions. To address that concern, in this section we show that at least in a particular limit, the SDA algorithm is able to learn classes defined by any Boolean function. We already saw in section \ref{sec:RWnets} that rectified wire networks can represent such classes, even with favorable size scaling. To also demonstrate learning we have to rely on an admittedly impractical family of networks, with size growing exponentially with the number of Boolean variables.

\begin{defn}\label{def:CBN}
The complete Boolean network, for learning a Boolean function $f(z_1,\ldots,z_N)$, has $N$ pairs of input nodes, $2^N$ hidden nodes, and two output nodes. The input nodes correspond to the literals $z_1,\bar{z}_1,\ldots,z_N,\bar{z}_N$, the hidden nodes to all possible $N$-clauses $\{z_1,\bar{z}_1\}\times\cdots\times\{z_N,\bar{z}_N\}$ formed from $N$ literals, and the output nodes correspond to the truth value of $f$. Each hidden node has $N$ edges to each of its constituent literals, and edges to both of the output nodes.
\end{defn}
Figure \ref{fig:twonet} shows the complete Boolean network for functions of two variables (rendered for zero bias on all edges). In general, the hidden nodes all lie in the same layer. With the weights at the output nodes fixed at 1, the limit of the SDA algorithm we will analyze is where the weight $w$ shared by all the hidden nodes approaches zero. We refer to this limit on complete Boolean networks as \textit{clause learning}

The SDA bias changes in clause learning are very simple. While $(-\nabla x_c)_i\in \{0,1\}$ when $i$ is an output node, by \eqref{gradrecursion} we have $(-\nabla x_c)_i\in \{0,w\}$ at all the hidden nodes. From \eqref{gradrecursion} it also follows that the bias changes on edges into the hidden nodes are smaller by a factor $w$ relative to edges leaving the hidden nodes. Recall that an SDA iteration is defined by the deactivation of some edge as biases are increased. In clause learning this always occurs in the second layer, in the edges to the outputs. As the hidden node values are $O(w)$, the biases to the output edges will also be $O(w)$ as that suffices for deactivation. While there are corresponding bias changes on edges into the hidden nodes, these are smaller by a factor of $w$, or $O(w^2)$, and far from making these edges inactive (since $x_i\in \{0,1\}$ on the input nodes).

\begin{figure}[!t]
\begin{center}
\includegraphics[width=2.5in]{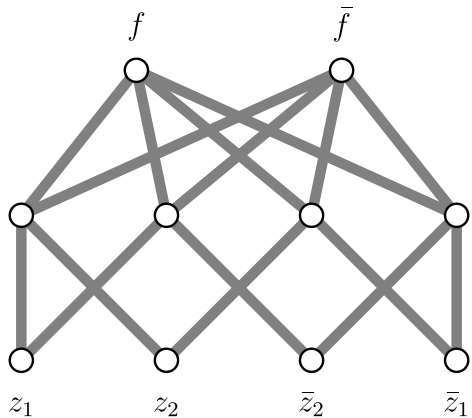}
\end{center}
\caption{Complete Boolean network for 2-variable Boolean functions $f(z_1,z_2)$. The function value is classified by the smaller of the two output node values, with the convention that node $f$ is smaller when $f(z_1,z_2)=0$.}
\label{fig:twonet}
\end{figure}

\begin{lem}\label{lem:CL}
In clause learning, the hidden nodes and the biases on edges leaving them, always have values of the form $w K+O(w^2)$, $K\in\mathbb{Z}_{\ge 0}$.
\end{lem}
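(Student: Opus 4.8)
The plan is to prove the statement by induction on the total number of bias-update steps performed during training — ranging over all data items and, within each item, over all SDA iterations — while carrying a slightly strengthened invariant that also controls the first layer. At every stage I will track, for the network of Definition \ref{def:CBN}, that (a) each input-node value is exactly $0$ or $1$; (b) each bias on an input-to-hidden edge is $O(w^2)$ and non-negative; and (c) each hidden-node value and each bias on a hidden-to-output edge has the asserted form $wK + O(w^2)$ with $K\in\mathbb{Z}_{\ge 0}$. The base case is immediate: all biases start at zero, so every hidden-to-output bias is $0 = w\cdot 0$, and with $b_{k\to i}=0$ each hidden value is $x_i = w\sum_{k\to i}\max(0,x_k) = w\,m_i$, where $m_i\in\mathbb{Z}_{\ge 0}$ counts the literals of clause $i$ that are \textsc{true} under the current input.

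For the inductive step I would follow one SDA iteration through the three procedures. The gradient is handled by the observation already made: $(-\nabla x_c)_j\in\{0,1\}$ on the outputs and hence, by \eqref{gradrecursion}, $(-\nabla x_c)_i\in\{0,w\}$ on the hidden nodes. Feeding this into \eqref{velrecursion}, the input-to-hidden velocities equal the hidden-node gradient (the incoming sum is empty at an input node) and so lie in $\{0,w\}$; consequently the velocity into the correct output $c$ is $1 + w\sum_{k\to i}(-\dot y)_{k\to i} = 1 + O(w^2)$, while the velocity into the other output is $O(w^2)$. Using the inductive form of the hidden values and hidden-to-output biases, each class-output edge output is $y_{i\to c} = w(K_i - L_i) + O(w^2)$ with $K_i,L_i$ integers, so its ratio in \eqref{stepsize} is $w(K_i - L_i) + O(w^2)$. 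Since active input-to-hidden edges carry $y\approx 1$ with velocity $O(w)$ (ratio $O(1/w)$) and edges to the non-class output have velocity $O(w^2)$ (ratio $O(1/w)$), the minimum is attained on a class-output edge, giving $t^* = w\,t_0 + O(w^2)$ with $t_0 = \min_i(K_i - L_i)\in\mathbb{Z}_{\ge 0}$.

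Given this form of $t^*$, the invariant is preserved. By \eqref{biasevolve} each hidden-to-output bias receives $t^*(-\nabla x_c)_j = (w t_0 + O(w^2))\cdot\{0,1\}$, which shifts its leading integer by $0$ or $t_0$ and keeps the $O(w^2)$ remainder; each input-to-hidden bias receives $t^*(-\nabla x_c)_i = (w t_0 + O(w^2))\cdot\{0,w\} = O(w^2)$, so it stays $O(w^2)$. Recomputing the hidden values from the updated first-layer biases gives $x_i = w\sum_{k:\,x_k=1}(1-b_{k\to i}) = w\,m_i + O(w^3)$, again of the asserted form. The terminal adjustment \eqref{improvedupdate} only resets an inactive bias to $\max(b_{i\to j},x_i)$ of its source value: on a hidden-to-output edge the source is a hidden node, so this is $wK+O(w^2)$, while on a first-layer edge the source value is $0$ (unit-valued inputs keep their edge active, as shown above), so that bias remains $O(w^2)$. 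Switching to a new data item merely resets the inputs to a new $0/1$ pattern, preserving the invariant.

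The step I expect to be the main obstacle is the uniformity of the $O(w^2)$ bookkeeping over the entire run: each iteration adds only an $O(w^2)$ amount to the first-layer biases, and I must rule out that summing these contributions silently produces an $O(w)$ term. This is where I would invoke that the graph of Definition \ref{def:CBN} has a fixed number of edges independent of $w$, and that each SDA iteration deactivates at least one edge; hence for a fixed training sequence the number of updates — and so the number of accumulated $O(w^2)$ corrections — is bounded independently of $w$, keeping the implied constants uniform. A secondary delicate point, needed to give $t^*$ an integer leading coefficient, is the identification of the minimizing edge in \eqref{stepsize} as a class-output edge rather than a first-layer or non-class event; this is precisely the ``deactivation occurs in the second layer'' observation made just before the lemma, which I would restate quantitatively through the ratio orders computed above.
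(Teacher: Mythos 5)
Your proof is correct and follows essentially the same route as the paper's: induction over SDA iterations with the invariant that hidden values and second-layer biases are $wK+O(w^2)$, first-layer bias changes are $O(w^2)$, and each deactivation occurs on an edge leaving a hidden node so the bias change is a difference of two such quantities. You additionally spell out details the paper leaves implicit (the ratio-order argument identifying the minimizing edge in \eqref{stepsize}, and the uniformity of the accumulated $O(w^2)$ corrections via the bound on the total number of deactivations), which strengthens rather than changes the argument.
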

\begin{proof}
We use induction, with the base case being the start of training where all the biases are zero. For any (doubled) input $d\in\{0,1\}^{2N}$, the value of a hidden node $h$ is $x_h=w K(d,h)$, where $K(d,h)\in\{0,\ldots,N\}$ counts the number of literals associated with node $h$ that are 1 for input $d$. Next suppose that a slightly modified statement continues to hold after $T$ iterations of SDA, that is, for any hidden node $h$ and any edge $h\to i$, $x_h,b_{h\to i} \in\{w K+O(w^2)\colon K\in\mathbb{Z}_{\ge 0}\}$. Consider what happens in iteration $T+1$, invoked when the network makes the wrong prediction and some edge is deactivated by an increase in bias. The deactivated edge will always be an edge leaving a hidden node, and the change of its bias will therefore be the difference of two numbers in the set $\{w K+O(w^2)\colon K\in\mathbb{Z}_{\ge 0}\}$, thus keeping the changed bias in this set. Since the corresponding bias changes on edges into the hidden nodes are $O(w^2)$, the statement about the values of the hidden nodes also continues to hold.
\end{proof}

\begin{thm}\label{thm:CL}
Clause learning succeeds for any Boolean function.
\end{thm}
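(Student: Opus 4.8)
The plan is to reduce the $w\to 0$ dynamics to a finite integer combinatorics problem and then show that the monotone training process halts at a configuration that classifies every input correctly. First I would rescale all quantities by $w$. By lemma \ref{lem:CL}, for the doubled input corresponding to a Boolean assignment $z$, every hidden node $h$ has value $x_h=wK(d,h)+O(w^2)$ with $K(d,h)=N-\mathrm{dist}(h,z)\in\{0,\dots,N\}$, and the biases on the hidden-to-output edges lie on the same lattice $w\mathbb{Z}_{\ge 0}+O(w^2)$. As the surrounding discussion establishes, a single SDA step in clause learning raises only the biases on edges into the correct output node $c$, all at a common unit rate, so every deactivation occurs in the output layer and each step advances these biases by an integer multiple of $w$ (the step size $t^\ast$ is a difference of lattice values). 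Hence, to leading order, the whole process lives on the grid $\{0,w,\dots,Nw\}$ for each bias $b_{h\to c}$: such a bias never exceeds $Nw$ (edge $h\to c$ is active only while its bias is below $x_h\le Nw$) and, by the conservative nature of the update (lemma \ref{lem:lem1}), never decreases.

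Because the biases are monotone nondecreasing and confined to a finite lattice, cycling repeatedly through the $2^N$ labeled inputs must reach a fixed configuration $b^\ast$ after finitely many passes (the sum of all biases is a bounded nondecreasing integer sequence, hence eventually constant). At $b^\ast$ no example changes any bias, so for every input $z$ the SDA while-loop terminates on its first test: writing $c$ for the correct output and $c'$ for the other, either $x_c(b^\ast,z)=0$ or $c$ is the strict minimizer, i.e. $x_c(b^\ast,z)<x_{c'}(b^\ast,z)$. To turn this into correct, \emph{unambiguous} classification it remains only to rule out the degenerate tie $x_c=x_{c'}=0$.

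This is the crux. Let $h_z$ be the unique hidden node with $K(d,h_z)=N$, namely the clause all of whose literals agree with $z$. I claim the edge $h_z\to c'$ to the wrong output is never fully cut: the bias $b_{h_z\to c'}$ is raised only while training on examples of the opposite class, and every such example $z'$ satisfies $f(z')\ne f(z)$, hence $z'\ne z$ and $\mathrm{dist}(h_z,z')\ge 1$; since a bias into an output node is never pushed past the current hidden-node value on a given step, $b_{h_z\to c'}\le (N-1)w+O(w^2)$ throughout training, and in particular at $b^\ast$. (The structural inputs here are that class-$c$ training leaves every $c'$-edge bias untouched, and that the improved update \eqref{improvedupdate} cannot raise an inactive $c'$-edge either, since an inactive edge already has bias at least its current node value.) Therefore edge $h_z\to c'$ remains active for input $z$, contributing $x_{h_z}(z)-b_{h_z\to c'}\ge w+O(w^2)$, so $x_{c'}(b^\ast,z)>0$. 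Combined with the fixed-point dichotomy above, this yields $x_c(b^\ast,z)<x_{c'}(b^\ast,z)$ in every case, which is exactly correct classification under the convention of Figure~\ref{fig:twonet}.

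I expect the main obstacle to be precisely this last step: showing that the matching clause's edge to the wrong class can never be deactivated, so that the losing output stays strictly positive and no ambiguous ``tie at zero'' can survive at a fixed point. The monotone-convergence half is comparatively routine once lemma \ref{lem:CL} is used to place the dynamics on a finite lattice; the only care needed there is to confirm that all active output-incident biases move in lockstep (equal unit velocities to leading order), so that the state never leaves the integer grid and the step sizes $t^\ast$ remain integral.
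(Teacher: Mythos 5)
Your proof is correct and follows essentially the same route as the paper's: the decisive step in both is that the bias on the edge from the clause exactly matching $z$ to the wrong output can only ever be raised while training on inputs $z'\neq z$, for which that clause's value is at most $(N-1)w+O(w^2)$, so that edge is never fully deactivated and the degenerate tie at zero cannot occur. The only difference is presentational: you make global termination explicit via a monotone bounded-lattice fixed-point argument, whereas the paper argues per-item success by bounding the number of deactivations by the number of hidden nodes.
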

\begin{proof}
Consider the values of the two output nodes when clause learning encounters the doubled input $d\in\{0,1\}^{2N}$ associated with the function argument $z\in\{0,1\}^{N}$. By lemma \ref{lem:CL}, $x_f=w K_f+O(w^2)$, $x_{\bar{f}}=w K_{\bar{f}}+O(w^2)$, where $K_f,K_{\bar{f}}\in\mathbb{Z}_{\ge 0}$. As long as the integer for the ``wrong" output is greater than zero, SDA will make bias changes, if necessary, to identify the correct class. For example, suppose $f(z)=0$ and $K_{\bar{f}}>0$. Through bias increases SDA can bring about $x_f<x_{\bar{f}}$ because all inactivations will be on edges $h\to f$, that is, from hidden nodes to the ``correct" output node. The only effect these changes can have on $x_{\bar{f}}$ is a decrease of order $O(w^2)$, due to $O(w^2)$ bias increases on edges into the hidden nodes. Since each edge $h\to f$ can become inactive just once, the number of SDA iterations required to produce the desired outcome is bounded by the number of hidden nodes.

The above argument does not apply in the case $K_{\bar{f}}=0$. However, we can argue that this never arises when $f(z)=0$. If it did, consider the input $d\in\{0,1\}^{2N}$ corresponding to $z$ and the unique hidden node $h$ whose combination of input-literals is such that $x_h=w N+O(w^2)$ (complete Boolean networks sample all $N$-literal combinations). Since $x_{\bar{f}}$ is a sum of non-negative contributions from all the hidden nodes, $K_{\bar{f}}=0$ is only possible if $b_{h\to \bar{f}}=w N+O(w^2)$. To finish the proof, we show that this value is inconsistent with conservative learning.

Fixing the hidden node $h$ from above, consider all the arguments $z$ for which $f(z)=1$ and the bias $b_{h\to \bar{f}}$ required to correctly assign them to the correct class by the smallness of $x_{\bar{f}}$. The corresponding inputs $d$ for such $z$ would produce $x_h=w K+O(w^2)$, with $K<N$, because the clause associated with $h$ is unique in producing $x_h=w N+O(w^2)$ only for an input for which $f(z)=0$. But to achieve the smallest contribution, $O(w^2)$, to $x_{\bar{f}}$ from edge $h\to \bar{f}$, we only need bias $b_{h\to \bar{f}}=w K+O(w^2)$, $K<N$. This is smaller (more conservative) than the hypothesized value from above.

The same argument, with interchanged output nodes, applies to learning data for which $f(z)=1$.
\end{proof}

\section{Balanced weights}\label{sec:weights}

We say a network is \textit{layered} when the nodes can be partitioned into a sequence of layers $\ell=0,1,\ldots,L$, such that all edges in the network are between nodes in adjacent layers. When a rectified wire network has the property that the weights are only layer-dependent, a suitable rescaling applied to the biases will eliminate all the weights --- replacing them by 1 --- without changing the classification behavior of the network. Denoting the layer of node $i$ as $\ell(i)$, we see from definition \ref{def:rectwire} that the rescalings
\begin{align}
\tilde{x}_{i}&=x_{i}/W_{\ell(i)}\nonumber\\
\tilde{y}_{i\to j}&=y_{i\to j}/W_{\ell(i)}\nonumber\\
\tilde{b}_{i\to j}&=b_{i\to j}/W_{\ell(i)}\label{biasrescale},
\end{align}
where
\[
W_{\ell(i)}=\prod_{\ell = 0}^{\ell(i)}w_\ell,
\]
leave the equations unchanged while replacing all the weights by
1. Here $\ell = 0$ and $\ell=L$ correspond to the input nodes and
output (class) nodes, respectively. Classification is unchanged
because all the output node values are rescaled by the same (positive) factor.

The weights do have an effect on the way the network is trained by the
conservative learning rule. In the following we motivate a particular
setting of the weights called \textit{balanced}, derived for each node $i$ from its in-degree, $|{\to} i|$, and out-degree, $|i{\to}|$.

From definition \ref{def:rectwire} we see that the choice
\begin{equation}\label{weight1}
w_i=\frac{1}{|{\to} i|}
\end{equation}
will have the effect that there is no net gain or decay in the typical node values $x$ when moving from one layer to the next. The appearance of the in-degree is associated with the forward propagation of $x$. 

A very different choice is suggested by recursion \eqref{gradrecursion}, which relates the bias changes $\dot{b}=(-\nabla x_c)$ throughout the network in conservative learning. 
If we wish the biases to have equal influence on the class output node, so that again there is no net gain/decay moving between layers, then the correct choice is 
\begin{equation}\label{weight2}
w_i=\frac{1}{|i{\to}|}.
\end{equation}
Here we have the out-degree because the bias changes are derived from backward propagation.

As edges become inactive over the course of training, the in-degree in
\eqref{weight1} and out-degree in \eqref{weight2} should count only
the active edges. However, both \eqref{weight1} and \eqref{weight2}
are problematic when they are unequal, even at the onset of edges
becoming inactive. When \eqref{weight1} and \eqref{weight2} are
unequal, the node values $x$ and accumulated bias changes $b$ will
grow at different rates from layer to layer. Since edge deactivation
is determined by $x-b$, it will not be uniform in the network, but
will be concentrated either at the input layer or the output layer.

To promote a more uniform distribution of edge activations in the network we use \textit{balanced weights}:
\begin{equation}\label{weight3}
w_i=\frac{1}{\sqrt{|{\to} i|\; |i{\to}|}},\quad i\in H.
\end{equation}
With this choice, the propagation of $x$ in the limit of small bias takes the form
\[
x_i=\frac{|{\to} i|}{\sqrt{|{\to} i|\; |i{\to}|}} \;x_{\to i},
\]
where $x_{\to i}$ denotes an average over nodes on the in-edges to $i$. From \eqref{gradrecursion} we have
\[
(\nabla x_c)_i=\frac{|i{\to}|}{\sqrt{|{\to} i|\; |i{\to}|}}\; (\nabla x_c)_{i\to},
\]
where $(\nabla x_c)_{i\to}$ denotes an average over nodes on out-edges of $i$. The two growth rates, adjusted for the same direction through the network, are now equal:
\[
\frac{x_i}{x_{\to i}}=\sqrt{\frac{|{\to} i|}{|i{\to}|}}=\frac{(\nabla x_c)_{i\to}}{(\nabla x_c)_i}.
\] 
Whereas neither $x$ nor $\dot{b}=(-\nabla x_c)$ will have zero layer-to-layer growth when there is a systematic imbalance in the in-degrees and out-degrees, by having equal growth there is a better chance that deactivations will occur uniformly across layers. Non-zero layer-to-layer growth/decay of variables in a rectified wire network does not present a problem because the equations have no intrinsic scale. Suitable rescalings of the type given at the start of this section, applied after training, can neutralize the layer-to-layer growth without changing the classification behavior of the network.

So far we have only considered the weights of the hidden nodes. The only other nodes with weights are the output nodes. These might be weighted differently based on prior information about the classes, such as their frequency in the data. However, when there is no distinguishing  prior information about the classes, the weights on the output nodes should be given equal values. By scale invariance we are free to impose
\[
w_i=1,\quad i\in C.
\]

As a tool for the study of rectified wire networks, we introduce a global weight-multiplier hyperparameter $q$ where formula \eqref{weight3} is replaced by
\begin{equation}\label{weight4}
w_i=\frac{q}{\sqrt{|{\to} i|\; |i{\to}|}},\quad i\in H,
\end{equation}
and $q=1$ corresponds to balanced weights. The limit $q\to 0$ is interesting because it collapses the rectified wire model to a much simpler one. As explained in section \ref{sec:clause}, and easily generalized to arbitrary numbers of layers, two things happen in the $q\to 0$ limit: (i) only edges in the final layer ever become inactive, and (ii) all biases except those in the final layer can be neglected. Because of (i), all the layers below the final hidden layer of nodes combine into a single linear layer, resulting in a model with just a single hidden layer. After scaling away the weights, the expression for the value of an output node $k\in C$ takes the form
\begin{equation}\label{q0limit}
x_k=\sum_{j\in H}\max{\left(0,\sum_{i\in D}a_{j i}\,x_i-b_{j\to k}\right)},
\end{equation}
where the integers $a_{j i}$ count the number of paths in the network from an input node $i$ to a hidden layer node $j$. This model, comprising a single layer of rectifier gates, is of course much easier to analyze than a general, multi-layered rectified wire model. By decreasing $q$ in experiments one can assess the value of depth in the network. In particular, if $q\to 0$ does not compromise performance, then depth is not being utilized in an essential way.

\section{Small networks}\label{sec:small}

Because both our network model (rectified wires) and training method (conservative learning) are unconventional, we use this section to demonstrate the model and method on very small networks before turning to more standard demonstrations in section \ref{sec:exp}. We first describe how the SDA algorithm trains the complete Boolean network on functions of two variables. This is followed by the study of a two-hidden-layer network, for functions on three Boolean variables.

We already proved (section \ref{sec:clause}) that the 16-edge network shown in Figure \ref{fig:twonet} learns all $2^{2^2}$ two-variable Boolean functions with SDA in the limit of vanishing weights in the hidden layer. Here we check that this continues to be true for balanced weights. Thanks to the symmetry of this network, we only need to check four equivalence classes of functions. Here equivalence is with respect to negating inputs or the output, or swapping the two variables. In all cases, these operations on the function correspond to relabelings of the input and output nodes of the network. Using the algebra of $\mathrm{GF}(2)$ for Boolean operations, representatives of the four equivalence classes are
\begin{align*}
f_0(z_1,z_2)&=0\\
f_1(z_1,z_2)&=z_1\\
f_+(z_1,z_2)&=z_1+z_2\\
f_\times(z_1,z_2)&=z_1 z_2.
\end{align*}

We have verified that the four functions above are learned on the 16-edge network with SDA and balanced weights ($q=1$). The biases were always initialized at zero, as rendered by equal-thickness wires in Figure \ref{fig:twonet}.
Recall how the network is trained on a function $f$ with the SDA algorithm. After a data vector $d=(z_1,z_2,\bar{z}_2,\bar{z}_1)$ is forward-propagated, the two output nodes will have equal or unequal values. If unequal, and the correct node --- corresponding to class $f(z_1,z_2)$ --- is smallest, no biases are changed and the next item is processed. If the smallest output is positive but wrong, or equal to the other output, then the SDA algorithm executes iterations until the correct output is smallest or zero. The network will then have learned $d$, except for the case where the iterations drive both outputs to zero. This irreversible mode of classification ambiguity does not arise with the 16-edge network and any of the four functions.

In ultra-conservative learning there is no guarantee that successfully learned data is not unlearned in subsequent training. However, by testing on  all four data each time biases are changed by SDA, one can establish whether the function is learned. The order in which data are processed matters, as manifested in the final bias values. Figure \ref{fig:fourfunc} shows two examples of final bias settings for each of the four functions.

\begin{figure}[!t]
\begin{center}
\includegraphics[width=5.in]{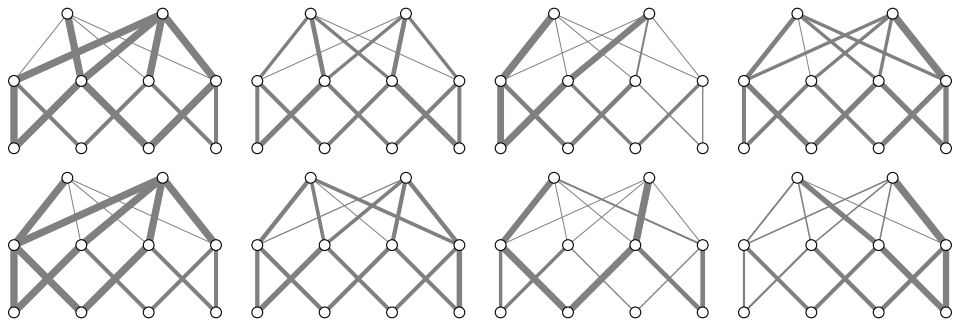}
\end{center}
\caption{Two examples of final bias settings, in columns, for networks trained on the functions (left to right) $f_0$, $f_1$, $f_+$, $f_\times$ with different data orderings.}
\label{fig:fourfunc}
\end{figure}

Define a \textit{trial} as a single training experiment where, starting with zero bias, the data are processed in some prescribed order. All trials, for any of the four functions above, are successful on the 16-edge network. That is, there always is a point where the biases stop changing and the data are correctly and unambiguously classified. Two numbers of interest in a successful trial are (i) $\mathrm{\#}_\mathrm{err}$, the total number of wrong or ambiguously classified data (invocations of SDA) over the course of training, and (ii) $\mathrm{\#}_\mathrm{iter}$, the total number of SDA iterations performed. These numbers are equal on the 16-edge network trained on any of the 2-variable functions, as one SDA iteration is sufficient to fix any incorrect classification in this case. Not surprisingly, the constant function $f_0$ is the easiest to learn, with $\mathrm{\#}_\mathrm{err}=1$. All trials with $f_1$ have $\mathrm{\#}_\mathrm{err}$ equal to 4 or 5. The \textsc{xor} function, $f_+$, always has $\mathrm{\#}_\mathrm{err}=3$, while the \textsc{and} function, $f_\times$, has the greatest variation with $\mathrm{\#}_\mathrm{err}=2,3$ or $4$.

The more aggressive variant of SDA, where iterations continue until the class output node is zero, also succeeds for all four functions on the 16-edge network with balanced weights. In this mode of training one pass through the data always suffices. We find, for all four functions, that the network must see all the data ($\mathrm{\#}_\mathrm{err}=4$) and $6\le\mathrm{\#}_\mathrm{iter}\le 9$.

Complete Boolean networks, such as the 16-edge network, quickly become too large for functions of many Boolean variables, and alternative network designs must be considered. For example, the nodes in the single hidden layer could be clauses formed from all small subsets of the variables, or even an incomplete sampling of such clauses. The success of clause learning ($w\to 0$) with such networks would of course depend on the nature of the function.   

The opposite limit applied to the hidden-layer weights, $w\to \infty$,
suggests a different design for single-hidden-layer networks. Because
biases on edges to the outputs never change in this limit, we connect
each hidden-layer node to a single output. The hidden-layer nodes are
now interpreted as proto-clauses, because their composition in
literals is modified during training by changes to the input
biases. For example, a proto-clause could sample both a variable and
its negation, and rely on training (bias change) to select one or the
other. Moving bias changes to the input side of the network
alleviates the exponential explosion of static clauses one faces in
the $w\to 0$ limit.

The departure from complete Boolean networks we feel is most
interesting is increasing the number of hidden node layers. Indeed, keeping training tractable in this setting was the primary motivation for the rectified wire model. To explore this feature we trained a network on three-variable Boolean functions where the hidden layer nodes only have in-degree two. To potentially express relationships among three variables, the hidden nodes are arranged in two layers.

The three-variable network, shown in Figure \ref{fig:threenet}, is
complete in the following sense. The nodes in the first hidden layer
exhaust the $3\times 4$ ways of collecting inputs from distinct
Boolean pairs, with and without negation. One attribute of these nodes
is the identity of the Boolean variable --- 1, 2 or 3 ---  that was
\textit{not} sampled. The second layer of hidden nodes exhausts all $3\times 4\times 4$ combinations of input pairs for which this first hidden layer attribute is different. As in the 16-edge network, the wiring between the final hidden layer and the output nodes is complete in the conventional sense. The resulting network has 216 edges.

\begin{figure}[!t]
\begin{center}
\includegraphics[width=4.in]{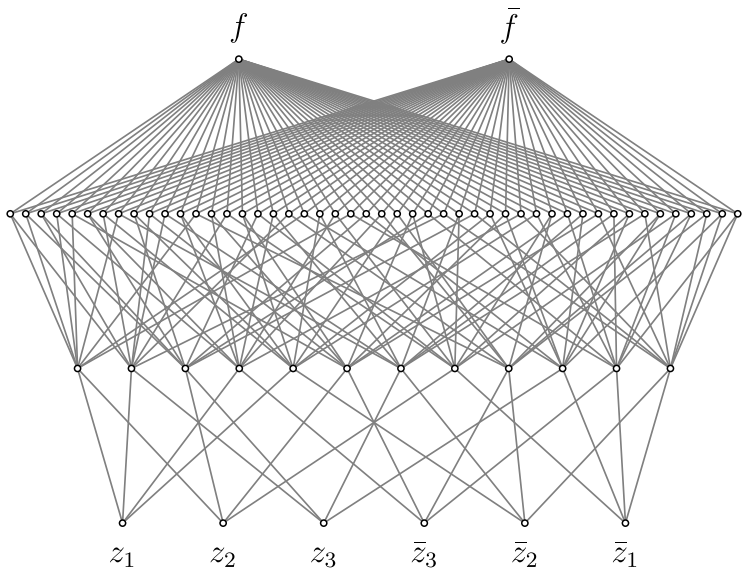}
\end{center}
\caption{A rectified wire network with two hidden layers that, we conjecture, learns all Boolean functions $f(z_1,z_2,z_3)$ on three variables. All hidden nodes in this network have two inputs.}
\label{fig:threenet}
\end{figure}

Bias renderings, after training, of networks with over 200 edges are not very comprehensible. However, the statistics of the numbers $\mathrm{\#}_\mathrm{err}$ and $\mathrm{\#}_\mathrm{iter}$ appear to correlate well with the naive complexity of the Boolean function. We considered three functions that evaluate to 0 and 1 with equal frequency:
\begin{align*}
f_1(z_1,z_2,z_3)&=z_1\\
f_>(z_1,z_2,z_3)&=z_1 z_2+z_2 z_3+z_3 z_1\\
f_+(z_1,z_2,z_3)&=z_1+ z_2+ z_3.
\end{align*}
Learning $f_1$, or learning to ignore $z_2$ and $z_3$, should be easiest. Function $f_>$, the logical majority function, we expect to be harder because its value, in some but not all cases, is sensitive to all three variables. By the same argument, the parity function $f_+$ should be the hardest of the three.

All trials we performed on $f_1$, $f_>$ and $f_+$, using SDA on the 216-edge network with balanced weights, were successful. Figure \ref{fig:threenetstats} shows the distribution of $\mathrm{\#}_\mathrm{err}$ and $\mathrm{\#}_\mathrm{iter}$ in 50 trials for each, differing only in the order in which the data are processed. By either metric --- the total number of false classifications or the total work in training --- the difficulty ranking of the functions is $f_1\;<\;f_>\;<\;f_+$. Less extensive trials on the full set of $2^{2^3}$ Boolean functions all proved  successful and leads us to conjecture that the 216-edge network of Figure \ref{fig:threenet} learns all 3-variable Boolean functions regardless of data order.

\begin{figure}[!t]
\begin{center}
\includegraphics[width=4.in]{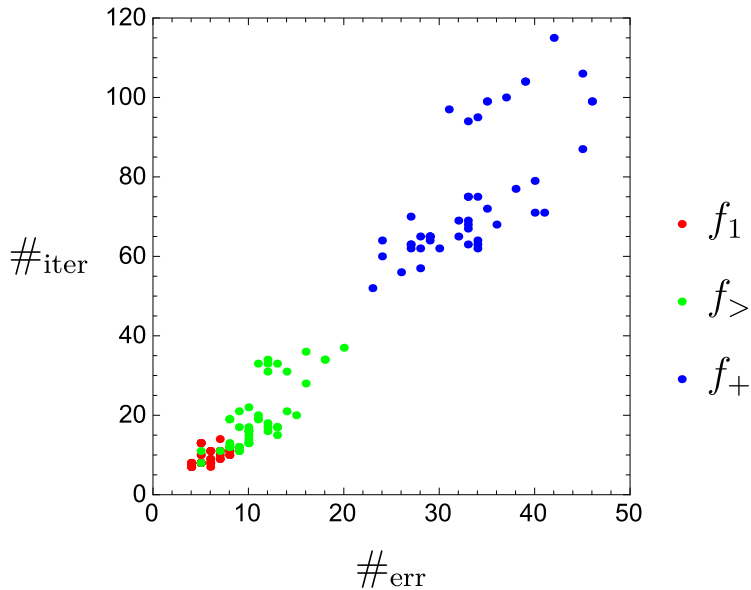}
\end{center}
\caption{Distribution of $\mathrm{\#}_\mathrm{err}$ and $\mathrm{\#}_\mathrm{iter}$ in 50 trials when learning the three-variable Boolean functions $f_1$, $f_>$ and $f_+$ on the network shown in Figure \ref{fig:threenet}.}
\label{fig:threenetstats}
\end{figure}

We note that the complete Boolean network, with only $2^3(3+2)=40$ edges, would have been much smaller in this instance and by theorem \ref{thm:CL} can learn all the Boolean functions in the $w\to 0$ limit. Still, it is reassuring to see that SDA also works in deep networks.

The three-variable Boolean functions also highlight the advantage of the ultra-conservative termination rule for SDA iterations. When we repeat the 50 trials with the stricter zero-output-value rule ($x_c=0$), we find that training is not always successful. While training on $f_1$ was always successful, the success rate on $f_>$ was 76\% and on $f_+$ only 22\%. The success rate improves for smaller $q$ and appears to be at 100\% for all three functions when $q=0.5$. Another point in favor of ultra-conservative termination is the greatly reduced number of iterations. For example, by forcing zero outputs, the total number of iterations needed to learn $f_1$, averaged over data ordering, is about 140, or about 15 times the number needed by the ultra-conservative rule.

\section{Sparse expander networks}\label{sec:expander}

A nice simplification provided by rectified wire networks is that the training algorithm has very few hyperparameters. Learning succeeds or fails mostly on the basis of the network we choose. However, it is possible that the tractability of rectified wire training comes at the cost of a greater sensitivity to network design. We therefore anticipate that network characteristics --- size, depth, sparsity --- will take over the role of hyperparameters.

As a general rule, learning improves with larger networks. Since the edge-biases are the only learned parameters of a rectified wire network, the appropriate measure of network size is the number of edges. In this section we introduce a family of networks that adds depth --- the number of hidden layers --- as a second design parameter.

Layered networks that are completely connected in the conventional sense suffer from a symmetry breaking problem. Since we start with all biases set at zero, a completely connected network has perfect permutation symmetry of its hidden nodes. This symmetry extends, during training, to the bias updates on all edges between the hidden nodes. To allow for independent settings of the biases, the initial biases must break the symmetry in some data-neutral fashion. Incompletely connected or even sparse networks are an extreme form of symmetry breaking, and correspond to infinite bias settings.

Our network design, called \textit{sparse expander networks}, generalizes the networks of section \ref{sec:small}. Sparsity in our case is the property that all the hidden nodes have in-degree of only two. At the same time, the representation of the data is expanded, layer to layer, by a constant growth factor $g$ in the size of layers. The expander structure cannot be applied to the final two layers because of the fixed (and small) number of output nodes. Here the layers are completely connected, thereby making all the information in the last (largest) hidden layer available to each output node. Letting $h$ denote the number of hidden layers, the number of edges in a sparse expander network for $|D|$ inputs and $|C|$ classes, with parameters $(g,h)$, is
\begin{equation}\label{edgecount}
|E|=|D|(2g+\cdots+2g^{h-1}+(2+|C|)g^h).
\end{equation}

To introduce a degree of uniformity, our networks have a constant out-degree of $2 g$ on all input and hidden nodes. This requires that $g$ is an integer. The in-edges to the nodes in a hidden layer are generated by the simplest random algorithm. In a first pass through the hidden layer we create one edge per node to the layer below by drawing uniformly from the nodes that currently have the smaller of two possible out-degrees. A second edge is added in a second pass, at the completion of which all the nodes on the layer below will have out-degree $2g$. Edges from the first hidden layer to the input layer are constructed no differently. This construction is implemented by the publicly available\footnotemark C program \textbf{\texttt{expander}} and was used for all the experiments reported in the next section.

We can use the hyperparameter $q$ to assess the role of depth. If performance does not degrade in the limit $q\to 0$, then the network could have been replaced by the much simpler single-hidden-layer model \eqref{q0limit}. When this limit is applied to a sparse expander network, the number of hidden nodes is $|H|=|D|g^h$. The sampling of the fixed weights $a_{j i}$ in the rectifier gates is also controlled by $g$ and $h$.

\section{Experiments}\label{sec:exp}

To describe experiments with conservatively trained rectified wire networks it suffices to specify the network. The SDA algorithm has no other hyperparameters, since there even is a balancing principle that sets the weights. The two-parameter sparse expander networks offer a convenient way to study behavior both with respect to network size and depth. These characteristics were our main focus and guided the choice of experiments going into this study.
Follow-up experiments, that featured the weight-multiplier hyperparameter $q$, later provided critical insights for network design that have yet to be explored.

All our experiments were carried out with a publicly available\footnotemark[\value{footnote}] C implementation of the SDA algorithm called \textbf{\texttt{rainman}}. This program  requires that the user-supplied network is layered and the number of inputs and outputs are compatible with the data. Our networks were all created by the program \textbf{\texttt{expander}}.

\footnotetext{\texttt{github.com/veitelser/rectified-wires}}

Although \textbf{\texttt{rainman}} reports results at regular intervals, after a specified number of data have been processed, this ``batch size" has no bearing on the actual training because bias parameters are only updated in response to wrong classifications. The batch size merely sets the frequency with which the network is tested against a body of test-data. The main result of this test is the classification accuracy: the test-data average of the indicator variable that is 1 for correct, unambiguous classifications, and $1/m$ when the correct class node is in an $m$-fold tie.

In addition to the classification accuracy, \textbf{\texttt{rainman}} also reports a number of other quantities of interest. Two we have already seen in section \ref{sec:small}: the running total of misclassifications, $\mathrm{\#}_\mathrm{err}$, and the work (iterations) performed so far by SDA, $\mathrm{\#}_\mathrm{iter}$. These should increase while the classification accuracy is below 100\%. However, in an ``overfitting" situation $\mathrm{\#}_\mathrm{err}$ and $\mathrm{\#}_\mathrm{iter}$ stop increasing before the accuracy has reached 100\%. Evidence of this phenomenon is more directly discerned through another statistic reported by \textbf{\texttt{rainman}}: the average number of SDA iterations required to fix each of the incorrect classifications encountered in the batch, $\langle\mathrm{\#}_\mathrm{iter}\rangle$. As defined, $\langle\mathrm{\#}_\mathrm{iter}\rangle$ is never less than 1. However, when there are no misclassifications in the batch at all (keeping $\mathrm{\#}_\mathrm{err}$ and $\mathrm{\#}_\mathrm{iter}$ static), the value $\langle\mathrm{\#}_\mathrm{iter}\rangle=0$ is reported.

Information more revealing of the internal workings of the network is provided by the layer-averages of the edge activations, $\langle\alpha_1\rangle,\ldots,\langle\alpha_h\rangle$. Here $\langle\alpha_h\rangle$ is the fraction of edges, in the layer leading to the output nodes, that are active in an average test-data item. These numbers decrease during training, a result of biases being increased.

Lastly, a statistic relevant to the termination of training is the frequency, in the test-data, of irreversible misclassifications, $\langle 0_\mathrm{err}\rangle$. The latter arise whenever an output node, not of the correct class, has value zero. The onset of $\langle 0_\mathrm{err}\rangle>0$ signals that training should terminate because activation levels are so low that SDA is forced to make multiple outputs zero even while targeting just the class output. A simple remedy for preventing $\langle 0_\mathrm{err}\rangle>0$ is to increase the size of the network. The case of $m$-fold classification ambiguities, mentioned above, occurs in practice only when $m$ output nodes are tied at zero. Ambiguous classification is therefore not a problem as long as $\langle 0_\mathrm{err}\rangle>0$.

On a single Intel Xeon 2.00GHz core \textbf{\texttt{rainman}} runs at a rate of 50ns per iteration per network edge. Multiplying this number by the product of $\mathrm{\#}_\mathrm{iter}$ and  $|E|$ \eqref{edgecount} gives a good estimate of the wall-clock time in all our experiments.

\subsection{Nested majority functions}\label{sec:nmf}

We first describe experiments with a synthetic data set. Synthetic data has some advantages over natural data. Our \textit{nested majority functions} (NMF) data has two nice features. First, through the level of nesting we can systematically control the complexity of the data. Second, because these Boolean functions are defined for all possible Boolean arguments, it is easy to generate large training sets for supervised learning.

NMFs are a parameterized family of Boolean functions that evaluate to 0 and 1 with equal frequency. The first parameter, a prime $p$, sets the number of arguments at $p-1$. Instances are defined by three integers $a,b,c\in\{1,\ldots,p-1\}$ and a nesting level $n=0,1,2,\ldots$ . Starting with
\[
f^0_a(z_1,\ldots,z_{p-1})=z_a,
\]
the other functions (taking values in $\mathrm{GF}(2)$) are defined via the recurrence
\[
f^n_a=f_>\left(f^{n-1}_{(a b\Mod{p})}+z(a c)\;,\;f^{n-1}_{(2a b\Mod{p})}+z(2a c)\;,\;f^{n-1}_{(3a b\Mod{p})}+z(3a c)\right),
\]
where $f_>$ is the 3-argument majority function and 
\[
z(x)=\left(x\Mod{p}\right)\Mod{2}
\]
should be interpreted as an element of $\mathrm{GF}(2)$. The functions $f^n$ at level $n$ correspond to depth-$n$ Boolean circuits taking $p-1$ inputs and constructed from \textsc{not} and \textsc{majority} gates. Figure \ref{fig:majfunc} shows an instance of the function $f^5$ for $p=7$. 

\begin{figure}[!t]
\begin{center}
\includegraphics[width=3in]{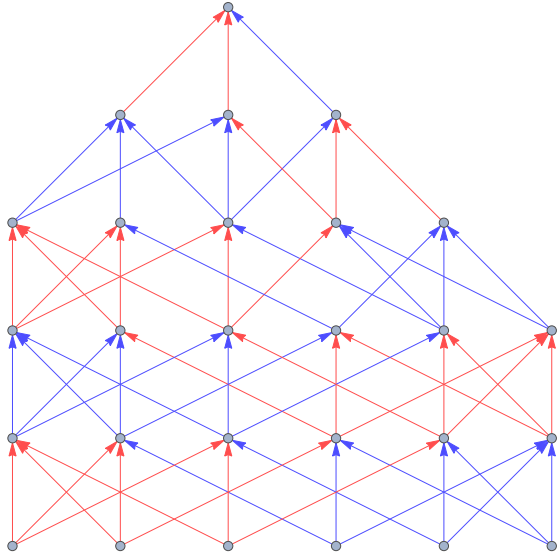}
\end{center}
\caption{An instance of a depth 5 nested majority function on 6 variables. The nodes are majority gates and the two edge colors indicate the presence/absence of negation. The depth $4,\ldots,1$ functions correspond to removing $1,\ldots,4$ layers from the bottom of the circuit.}
\label{fig:majfunc}
\end{figure}

From the identity
\[
f_>(z_1+1,z_2+1,z_3+1)=f_>(z_1,z_2,z_3)+1
\]
it follows that negating all the arguments of a NMF negates its value. The two classes, defined by the NMF value, are therefore equal in size.

Our experiments used data generated by the functions $f^n_1$ for $p=31$ and $b=2$, $c=3$. The easiest of these, with $n=0$, only depends on a single argument. To learn this function the rectified wire network (taking $2\times 30$ inputs) must identify the correct one and ignore all the others. In the case of $f^1_1$ it must select three of the arguments and also whether to apply negations. The functions $f^2_1$, $f^3_1$, etc. become progressively harder as they depend on more arguments and with more complicated rules. The possibility of generalizing the value of $f^n$, from many fewer than $2^{p-1}$ data, is clearly only possible for small $n$. Our results go as far as $n=4$.

We generated training and test data samples (of the 30 Boolean variables) with a pseudo-random number generator. In the longest experiment, with $10^7$ training samples, less than 1\% of the data was seen more than once by the training algorithm.

We first present results for the function $f^3_1$. Fixing the number of hidden layers at $h=3$, Figure \ref{fig:majcompareh3} shows the effect of increasing the network size via the growth factor $g$. The accuracy is plotted as a function of $\mathrm{\#}_\mathrm{err}$, rather than the number of training data, because learning (change of bias) occurs only when there has been a misclassification. In all except the $g=14$ network (683760 edges), the accuracy reaches a maximum after which there is a sharp decline that coincides with the onset of $\langle 0_\mathrm{err}\rangle>0$. The smaller networks could still serve as imperfect classifiers by terminating their training at the empirical maximum test accuracy, or, in the absence of test data, when $\langle 0_\mathrm{err}\rangle$ exceeds a threshold. We have no explanation of the intriguing similarity of the unsuccessful accuracy plots. We also find that many of the seemingly random features in the accuracy plots are nearly reproduced in different random instances of expander networks with the same $(h,g)$. 

We emphasize that it is not really surprising that generalization performance can drastically deteriorate with more training examples for small networks. Conservative learning is using a fundamentally different principle than the standard approach that tries to minimize an overall loss function across training examples. Conservative learning is merely trying to myopically account for the most recently seen example, which can have negative consequences for other training examples, to say nothing of unseen test examples.

\begin{figure}[!t]
\begin{center}
\includegraphics[width=5.in]{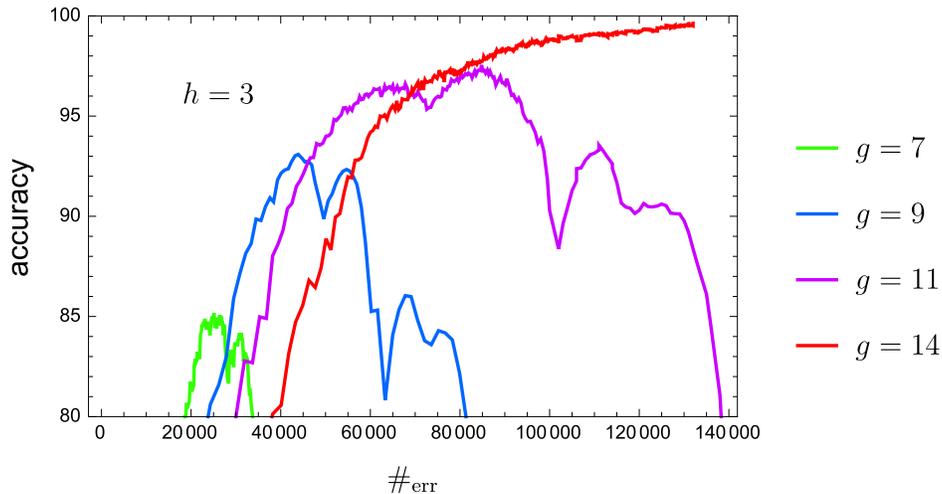}
\end{center}
\caption{Improvement of the classification accuracy with network size, for the function $f^3_1$, on rectified wire networks with three hidden layers.}
\label{fig:majcompareh3}
\end{figure}

The effect of changing network depth is shown in Figures \ref{fig:majcomparedepth1} and \ref{fig:majcomparedepth2}. Although the three featured networks have about the same number of edges, only the two deeper networks are successful. Figure \ref{fig:majcomparedepth1}, with accuracy plotted against $\mathrm{\#}_\mathrm{err}$, shows that learning is faster on shallower networks. On the other hand, when plotted against $\mathrm{\#}_\mathrm{iter}$ in Figure \ref{fig:majcomparedepth2}, we see that less work is required to train the deeper networks. Work, as measured in SDA iterations, is not evenly distributed over the course of training. Figure \ref{fig:majaveiter} shows $\langle\mathrm{\#}_\mathrm{iter}\rangle$ plotted against $\mathrm{\#}_\mathrm{err}$. More iterations are needed early in training, especially so in shallow networks.

The balanced weight heuristic is far from successful in keeping the layer-averaged activation levels uniform. After training the $(h,g)=(3,14)$ network, we find $\langle\alpha_3\rangle=1.1\%$, $\langle\alpha_2\rangle=47\%$ with the rest at 100\%. For the deeper and also successful $(h,g)=(4,7)$ network, the diminished activations are $\langle\alpha_4\rangle=4.1\%$, $\langle\alpha_3\rangle=57\%$, $\langle\alpha_2\rangle=76\%$.

The lower complexity of the functions $f^n_1$ for $n<3$ is reflected in the smaller networks required to learn them. On networks with two hidden layers ($h=2$), $f^2_1$ is successfully learned with $g=21$ (108360 edges) after only 5000 misclassifications. For $f^1_1$ the corresponding numbers are $g=6$ (9360 edges) and $\mathrm{\#}_\mathrm{err}=900$. On the other hand, if we train $f^4_1$ on the same network ($h=4$, $g=7$) that learned $f^3_1$ with $\mathrm{\#}_\mathrm{err}\approx 4\times 10^5$, the best accuracy, achieved for roughly the same $\mathrm{\#}_\mathrm{err}$, is only 91.9\%.

\begin{figure}[!t]
\begin{center}
\includegraphics[width=5.in]{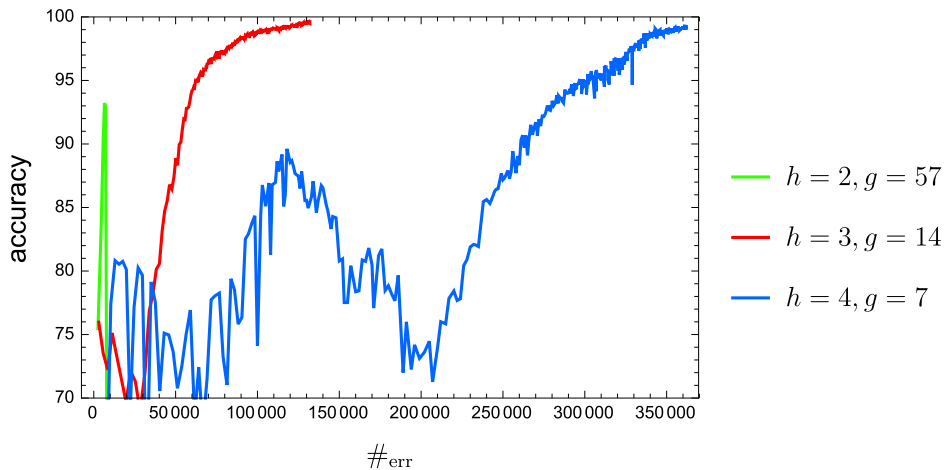}
\end{center}
\caption{Of three networks with comparable size trained on the function $f^3_1$, the deeper networks are more successful and require more data.}
\label{fig:majcomparedepth1}
\end{figure}

\begin{figure}[!t]
\begin{center}
\includegraphics[width=5.in]{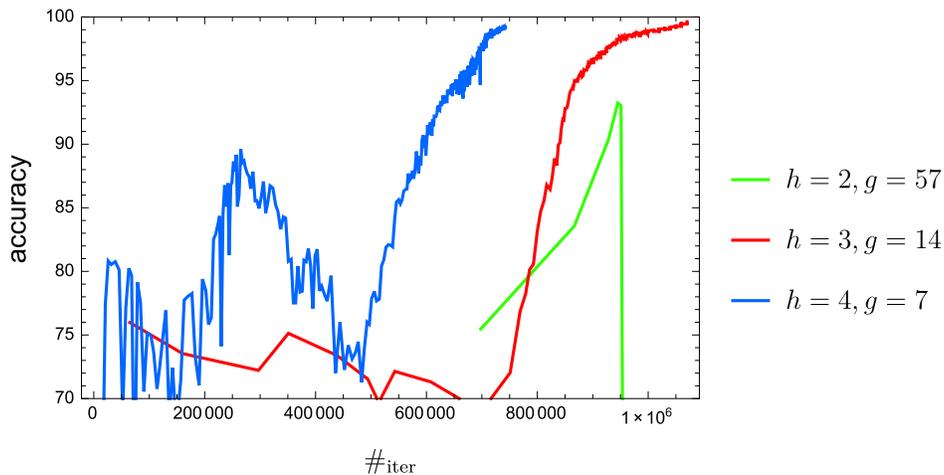}
\end{center}
\caption{Same as Figure \ref{fig:majcomparedepth1}, except plotted against the total work (SDA iterations).}
\label{fig:majcomparedepth2}
\end{figure}

\begin{figure}[!t]
\begin{center}
\includegraphics[width=5.in]{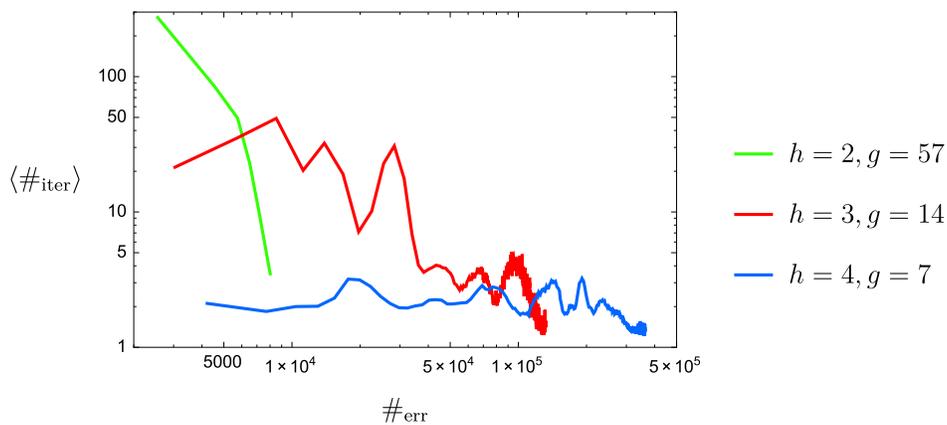}
\end{center}
\caption{Average number of iterations used by the SDA algorithm to fix a wrong classification over the course of training for the three networks featured in Figs. \ref{fig:majcomparedepth1} and \ref{fig:majcomparedepth2}.}
\label{fig:majaveiter}
\end{figure}

\begin{figure}[t]
\begin{center}
\includegraphics[width=5.in]{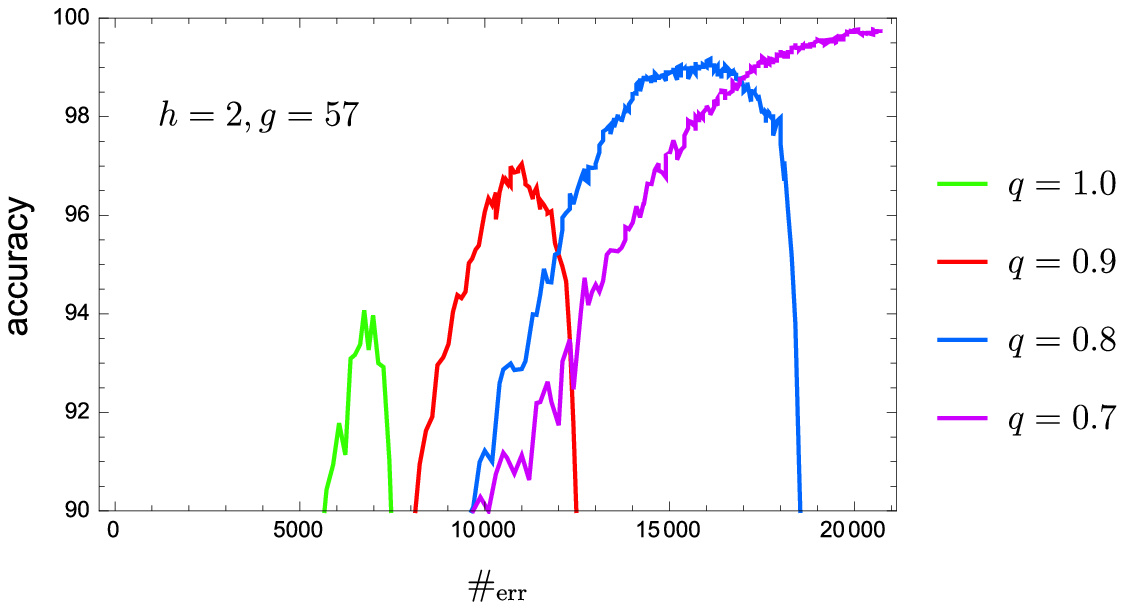}
\end{center}
\caption{Evolution of the prediction accuracy with the weight-multiplier hyperparameter $q$ on the $h=2, g=57$ network for the function $f^3_1$.}
\label{fig:qseries}
\end{figure}

The prediction accuracy on NMF data, for all of our networks, is improved by decreasing the weight-multiplier hyperparameter $q$. In fact, it appears that $q\to 0$ is optimal, or that the best performance can be achieved with the model \eqref{q0limit} comprising a single layer of rectifier gates with fixed weights $a_{j i}$. A striking case is the function $f^3_1$ trained on the sparse expander graph with $h=2, g=57$, already encountered in Figure \ref{fig:majcomparedepth1}. Figure \ref{fig:qseries} shows the effect of decreasing $q$ from the balanced setting, $q=1$. We see that the 94\% accuracy of the $q=1$ network reaches 100\% already for $q=0.7$. It seems the only downside in decreasing $q$ is an increase in the number of training data (greater $\mathrm{\#}_\mathrm{err}$). The $q\to 0$ weight ``codewords" $a_{j i}$ associated with the sparse expander network are very simple. Because the network has in-degree two and depth $h=2$, almost all the rectifiers see four random inputs that are given equal weight. It is likely that the performance of model \eqref{q0limit} can be further improved with weight codewords that go beyond those generated by sparse expander networks.

In light of the $q\to 0$ findings, it is not surprising that conventional neural networks, even with just a single hidden layer, easily outperform the rectified wire model on the NMF data. A fully-connected ReLU network with one layer of only 50 hidden nodes was sufficient to learn $f^3_1$ completely after seeing $5\times 10^5$ samples. This can be compared with the $2\times 10^4$ samples ($\mathrm{\#}_\mathrm{err}$) that informed bias changes in the rectified wire model when learning the same data (Figure \ref{fig:qseries}). The vast reduction in the number of hidden nodes, from $2\times 10^5$ (penultimate layer of the $h=2, g=57$ network) to 50, no doubt is attributable to the standard model's capacity for ``designing" the weight-codewords that in model (16) are static.

\subsection{MNIST}\label{sec:mnist}

Seen as images, the MNIST handwritten digits \citep{lecun1998gradient} are analog data. Because the contrast in these images is well approximated as two-valued, it makes sense to map pixels to a two-symbol alphabet and train a rectified wire network on data of that type. We report results with this approach at the end of this section. However, we first try processing MNIST non-symbolically, using the analog data vectors described in equation \eqref{analogvec}.

One motivation for the analog approach is that MNIST images are highly compressible. Playing to a possible strength of rectified wire networks, it might be advantageous to first map MNIST images, and images in general, to data vectors the components of which are more nearly statistically independent. We use a method based on principal component analysis.

Using singular value decomposition, we approximate the $60000\times 28^2$ matrix $X$ of flattened MNIST training images as the product $X\approx U V$, where the rows of the $r\times 28^2$ matrix $V$ are the orthonormal ``eigen-images" associated with the $r$ largest singular values of $X$. A flattened image $x$ (training or testing) is mapped to $r$ analog channels by $v=V x$. Applying the component-wise cumulative probability distribution functions to this $v$ then gives us the $2r$-component data vector as in equation \eqref{analogvec}. We compute the cumulative probability function from the training data and use the same function when processing the test data (with test samples below the minimum or above the maximum training samples mapped to 0 and 1 respectively).

Figure \ref{fig:mnistanalogacc} shows the test accuracy when training a sparse expander network on analog MNIST data with $r=50$ channels. This network has 205400 edges, the same number a conventional network would have with a single fully-connected layer of 259 hidden nodes. After making about $\mathrm{\#}_\mathrm{err}=25000$ misclassifications, there are no further misclassifications on the training data and training ceases. The test accuracy attains a maximum of $94.54\%$ at this point. Figure \ref{fig:mnistanalogiter} shows the evolution of $\langle\mathrm{\#}_\mathrm{iter}\rangle$ over the course of training. Generalization (test accuracy) improves when we decrease the $q$ hyperparameter from the value $q=1$ for balanced weights. However, the improvement, reaching $94.99\%$ at $q=0.2$, is much more modest than we found for the NMF data. The time used by \textbf{\texttt{rainman}} in any of these training runs was about 65 minutes. Although about one million data (16 epochs) are processed, only for about $25000$ of these does SDA do anything.

We also used the analog MNIST data to compare SDA training with stochastic gradient descent (SGD) training on the same model and network architecture. Because the sparse networks we use for SDA are a poor fit for GPU-based software, we wrote a C++ SGD optimizer that, like SDA, runs on a single core without calls to a GPU. In the strictest comparison SGD used the same monotone loss function, $x_c$, used by SDA. For SGD we also tried multi-class hinge loss, with margin parameter $0.1$. As discussed in section \ref{sec:class}, this loss spoils tractability (monotone bias changes) and is therefore not applicable for SDA. The mini-batch size for SGD was fixed at 100 and we employed standard stochastic gradient descent without momentum.

SGD does poorly when forced to use SDA's monotone loss. With the learning rate set at $0.002$, training accuracy reaches a maximum of $91.5\%$ in 6 minutes; this is also the test accuracy for this mode of operation. Since SDA uses adaptive gradient steps (set by deactivations), we also tried SGD with a ten-fold reduction in learning rate. While training now goes through 97 epochs of data and takes about as long as SDA, there is no improvement in either the training or test accuracy. We interpret this as supporting the conservative learning principle: there are gains when parameters are optimally adjusted in response to individual data.

The performance of SGD improves significantly when the rectified wire model is trained with multi-class hinge loss. However, due to the structure of expander networks, this loss is problematic when applied exclusively. When evaluated on the initial expander network (all edges active), the partial derivatives of the loss with respect to all biases, except those on edges to the outputs, are identically zero. This occurs because 
the effect of all bias changes (except in the final layer) passes through the penultimate layer of nodes that is completely connected to the outputs, and equal changes on these leaves the hinge loss unchanged. To avoid the zero derivative problem we used a hybrid approach, where hinge loss was used only after some number of data had first been processed with the monotone $x_c$ loss to create deactivations that break the symmetry in the final layer. Switching to hinge loss after 5 epochs had no effect on accuracy, but switching after 10 epochs improved training and test accuracies to $94.4\%$ and $94.1\%$, respectively. This trend in improvement continues and reaches $96.8\%$ (training) and $95.6\%$ (test) when the switch is made after 20 epochs (121 minutes). We attribute the modest improvement in generalization ($0.6\%$), over the SDA trained model, to the margins that hinge loss is able to insert in the class boundaries.

\begin{figure}[!t]
\begin{center}
\includegraphics[width=4.in]{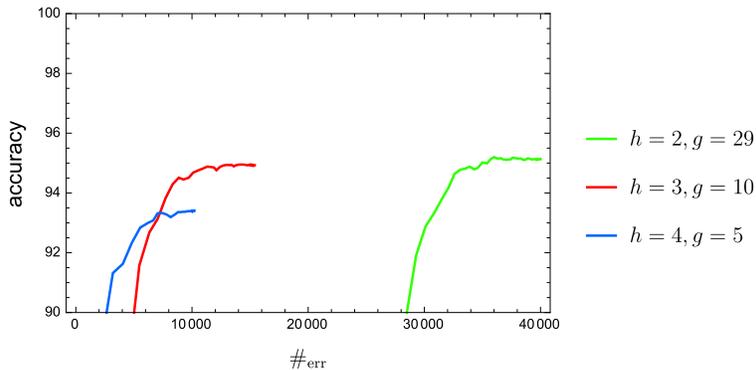}
\end{center}
\caption{Classification accuracy of MNIST images from 50-principal-component data vectors with the SDA algorithm on a sparse expander network with parameters $h=2$, $g=13$.}
\label{fig:mnistanalogacc}
\end{figure}

\begin{figure}[!t]
\begin{center}
\includegraphics[width=4.in]{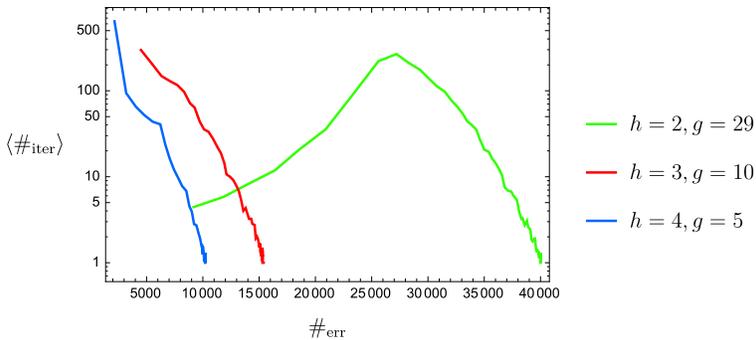}
\end{center}
\caption{SDA iterations per misclassification for the same data and network as in Figure \ref{fig:mnistanalogacc}}.
\label{fig:mnistanalogiter}
\end{figure}

We do not believe the overfitting we saw with SDA was caused by limiting the analog data vectors to only 50 channels, because essentially the same classification accuracy is obtained with the uncompressed symbolic data. The latter was generated from the raw MNIST images by thresholding pixel values at $40\%$ of the maximum (the minimum point of the pixel histogram) to define the two symbols. Figure \ref{fig:mnistbinacc} shows the evolution of the accuracy at fixed depth $h=4$ as the network size increases. In the smallest network ($g=1$), as expected, the accuracy drops precipitously at the onset of $\langle 0_\mathrm{err}\rangle>0$. However, already in the next network ($g=2$) we have the situation where training terminates because most (training) data are correctly classified and the remaining ambiguous cases are ties at zero (in output) and untrainable. At termination we find $\langle 0_\mathrm{err}\rangle=2.7\%$, contributing to the reduction in accuracy. Increasing the network size further ($g=3$) gives accuracy 94.8\% at termination, where now $\langle 0_\mathrm{err}\rangle=0$. This accuracy is close to that obtained with the analog data.

Although our results for MNIST are an improvement over the simple linear classifier with accuracy 88\%, and even a pairwise-boosted linear classifier (92.4\%), the appropriate comparison should be with other tractable methods. Of these, we note that support vector machines with a Gaussian kernel already achieve 98.6\% accuracy on MNIST. These and other techniques for MNIST are reviewed in~\cite{lecun1998gradient}.

\begin{figure}[!t]
\begin{center}
\includegraphics[width=5.in]{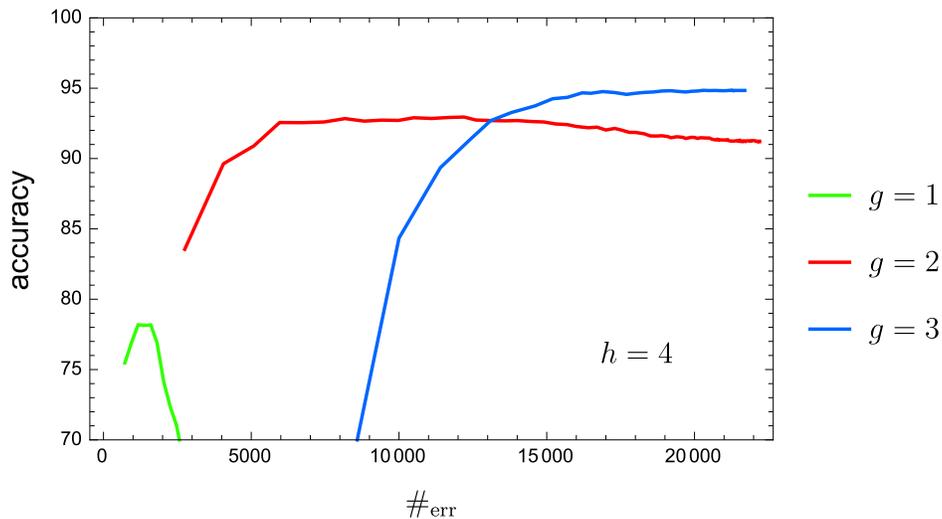}
\end{center}
\caption{Classification accuracy of binarized MNIST images on rectified wire networks with four hidden layers.}
\label{fig:mnistbinacc}
\end{figure}

\begin{figure}[!t]
\begin{center}
\includegraphics[width=5.in]{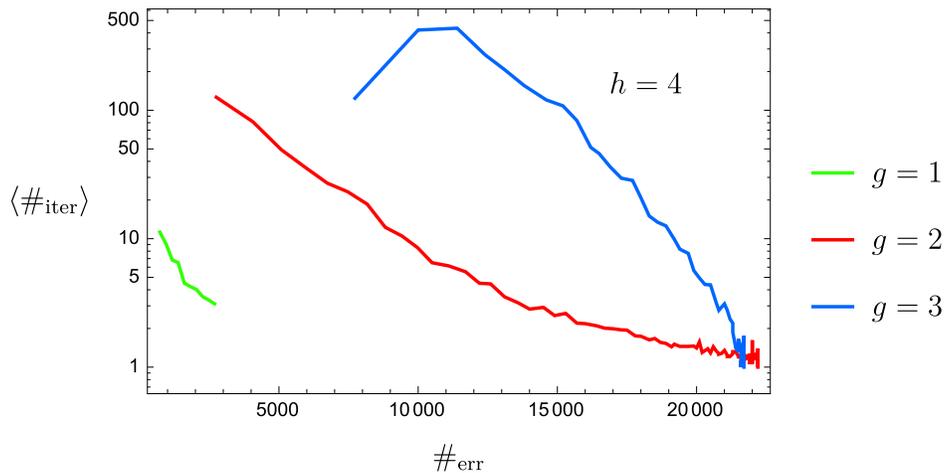}
\end{center}
\caption{SDA iterations per misclassification for the same data and networks shown in Figure \ref{fig:mnistbinacc}.}
\label{fig:mnistbiniter}
\end{figure}

\subsection{Markov sequences}\label{sec:markov}

As our final application we study a synthetic symbolic dataset where, unlike the nested majority function data, samples are generated probabilistically. The data are strings in the alphabet \{\texttt{A,B,C,D}\} generated by Markov chains. Examples of 12-symbol strings from the two classes are shown in Table \ref{tab:markovseq}. Those in class 1 are extracts from the Markov chain defined by the transition matrix
\begin{equation}\label{T}
T=\frac{1}{10}\left[
\begin{array}{cccc}
1&2&1&6\\
3&1&4&2\\
4&1&4&1\\
2&6&1&1\\
\end{array}
\right].
\end{equation}
Since $T$ is doubly stochastic, its transpose defines another Markov chain and is the source of the strings in class 2. The doubly stochastic property also ensures that the four symbols occur with equal frequency. Finally, the absence of zeros means that all bigrams have finite probability.

\begin{table}[t]
\begin{center}
\begin{tabular}{c|c}
class 1&class 2\\
\hline
\texttt{CADCCCACCADA} & \texttt{ACCBDBABDBCC}\\
\texttt{ADABDCCCDACB} & \texttt{ABDAABACCDDB}\\
\texttt{CBCCADBBCADB} & \texttt{BDBDADACDACD}\\
\texttt{DADBDADDBDCA} & \texttt{DBADCCADADDA}\\
\texttt{BCAADBBABCDB} & \texttt{BDACCCDABBDB}\\
\texttt{ABDDBADCABBD} & \texttt{CCCBDABCADBD}\\
\end{tabular}
\end{center}
\caption{Examples of 12-symbol strings extracted from the Markov chain with transition matrix \eqref{T}, in class 1, and its transpose, in class 2.}
\label{tab:markovseq}
\end{table}

Even a machine that managed to reconstruct $T$ from the data could never have perfect classification accuracy, because every 12-symbol string occurs with finite probability in either chain. The optimal classifier selects the chain (class) that gives each string the highest probability. With this criterion applied to our $T$, the true-positive rate for 12-symbol strings is 95.1\%. We also trained networks on 25-symbol strings, for which the true-positive rate is 99.1\%. These numbers set the maximum achievable classification accuracy. 

\begin{figure}[!t]
\begin{center}
\includegraphics[width=5.in]{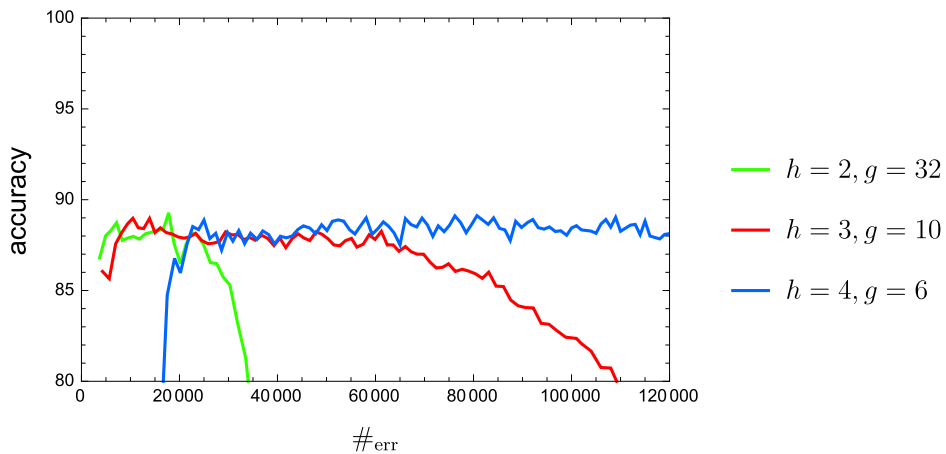}
\end{center}
\caption{Classification accuracy of 12-symbol strings generated by the Markov chain \eqref{T} or its transpose.}
\label{fig:markov12acc}
\end{figure}

\begin{figure}[!t]
\begin{center}
\includegraphics[width=5.in]{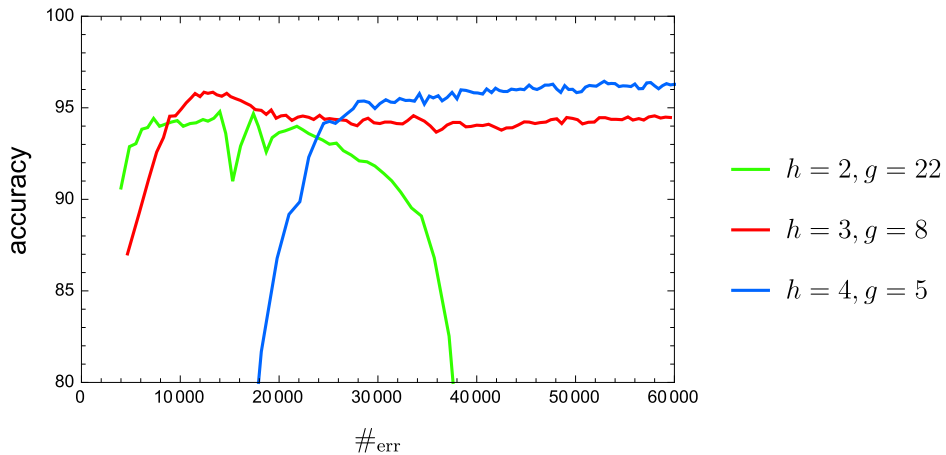}
\end{center}
\caption{Classification accuracy of 25-symbol strings generated by the Markov chain \eqref{T} or its transpose.}
\label{fig:markov25acc}
\end{figure}

In Figures \ref{fig:markov12acc} and \ref{fig:markov25acc} we compare classification accuracies for 12- and 25-symbol strings as learned on networks having approximately $2\times 10^5$ edges. We see that here the depth of the network has very little effect on the maximum accuracy. By the probabilistic nature of the data, these accuracies must eventually decline as strings having higher probability, in the class with the other label, are encountered. Deeper networks are less vulnerable to this phenomenon. The highest accuracy achieved for 12-symbol strings is 6\% below that of the optimal classifier; for 25-symbol strings the shortfall is 3\%.

We did not explore the effect of decreasing the $q$ hyperparameter for this data set.

\section{Discussion}\label{sec:open}

The rectified wire model is a significant departure from the ``standard model" of neural networks. Even so, the training algorithm we derived for this model, SDA, deserves comparison with the leading algorithm for the standard model: stochastic gradient descent (SGD) \citep{rumelhart1986learning}. In both algorithms the network parameters are updated after only a small amount of data has been seen: a single item in the case of SDA, a mini-batch for SGD. The computation of updates involves propagating data vectors in a forward pass, and the class of the data --- or the associated loss --- in a backward pass. SDA makes an extra forward pass that determines the size of each update, signaled by a deactivation event. In SGD the size of the update is set by the learning rate, a hyperparameter, and the update itself is an average over the mini-batch, whose size is another hyperparameter.

The characteristic of SDA that most differentiates itself from SGD is the monotone evolution of the network parameters. Ostensibly this is problematic: there is no ``going back" on any updates, including those made early when very little of the data had been seen. This tacitly assumes the learned state of the network is unique, or nearly so. But an alternative hypothesis is also worth considering: a high multiplicity of learned states, all equally viable. Our experiments with SDA support this hypothesis. Very different final bias settings, all giving perfect classification, were obtained even for simple classes just by changing the order of the data. Our experiments with Markov-chain generated strings showed that SDA is also not tripped up by outlier data. Although the final network in this probabilistic setting must have 100\% classification ambiguity, the accuracy in practice is near that of the optimal classifier long before this asymptotic state is reached.

Not all our experiments with SDA were successful. On the MNIST dataset the algorithm suffered from overfitting: whereas the network had learned to perfectly classify the training data, the classification accuracy on the test data was only 95\%. Does this reflect a fundamental deficiency in the rectified wire model, and the conservative learning principle more generally, or is it a result of poor execution?

An even more serious problem with the rectified wire model was exposed, in experiments, upon decreasing the weight-multiplier hyperparameter $q$. Recall that the primary motivation for the model was tractable training in the presence of network depth. Significant simplifications of the standard neural network model, such as fixing the weights and replacing loss by a constraint, were key in facilitating tractable training. Even so, the expressivity provided by the remaining bias parameters remains high (arbitrary Boolean functions), and the training algorithm (SDA) fully utilizes activations (ReLU nonlinearities) in every layer of the network. The $q$ parameter was introduced to explore behavior with respect to depth: small/large $q$ favors bias changes near the output/input nodes. Though we studied a very limited class of networks --- sparse expander nets --- the experimental results pointed uniformly in favor of $q\to 0$. In this limit the rectified wire model reduces to a much simpler model \eqref{q0limit}, comprising a single layer of rectifier gates. While this limiting model is still non-trivial and interesting, it definitely lacks depth.

Our experience with the rectified wire model highlights the difference between the capacity for depth and the learnability of depth. Although there exist bias settings in the model that efficiently represent complex Boolean functions and utilize activations on all levels, experiments show that these are not the kinds of bias settings that are learned by the tractable, conservative route. At least that is the conclusion we should draw from the experiments performed so far.

There are still some directions to explore before we can conclude that conservatively trained rectified wire networks lack the capacity to learn ``in depth". The most obvious is to try architectures very different from the sparse expander networks considered so far. These need not even be layered, although they must break symmetry if we continue to initialize all the bias parameters at zero. Specialized architectures may be inherently better at exploiting depth, as in the case of standard model networks with convolutional layers. Next, we recall that the bias updates given by the SDA algorithm can fall short of the most conservative possible. While we do not know whether the quadratic program \eqref{MP} can be solved by something as efficient as the SDA algorithm, experiments with general-purpose solvers could study the effect of update quality on training. By the same route one could study the effect of learning, in aggregate, entire mini-batches (section \ref{sec:class}, conclusions).
Another avenue is to define ``conservative" by the 1-norm on the bias updates. Our choice of the 2-norm was motivated by the uniqueness of the optimal updates. However, it seems likely that in the mini-batch setting even the 1-norm will have unique optima. Finally, it might be interesting to allow the weights in the network to vary slowly over the course of training. This could be as simple as favoring different depths over time using the $q$ parameter, or a more sophisticated scheme based on running averages.


\acks{We thank Alex Alemi, Sergei Burkov, Pat Coady, Cris Moore and Mehul Tikekar for discussions over the course of this project. We also thank the three reviewers whose comments brought significant improvements to the paper.}



\vskip 0.2in
\bibliography{18-281}

\end{document}